\documentclass[dvipsnames,format=sigconf,anonymous=false,review=false]{acmart}
\renewcommand\footnotetextcopyrightpermission[1]{}
\AtBeginDocument{%
  }

\usepackage{microtype}
\usepackage{subcaption}
\usepackage{booktabs}
\usepackage{float}
\usepackage{bm}
\usepackage{algorithm}
\usepackage{algorithmic}
\usepackage{multirow}
\usepackage{mathtools}
\usepackage[capitalize,noabbrev]{cleveref}
\usepackage{todonotes}
\usepackage{placeins}

\acmConference[]{}{}{} 
\acmBooktitle{}
\begin{document}

\title{Non-linear PCA via Evolution Strategies: a Novel Objective Function}


\author{Thomas Uriot}
\affiliation{%
  \institution{}
  \country{The Netherlands}}
\email{uriot.thomas@gmail.com}

\author{Elise Chung}
\affiliation{%
  \institution{}
  \country{The Netherlands}}
\email{elise.chungg@gmail.com}

\begin{abstract}
Principal Component Analysis (PCA) is a powerful and popular dimensionality reduction technique. However, due to its linear nature, it often fails to capture the complex underlying structure of real-world data. While Kernel PCA (kPCA) addresses non-linearity, it sacrifices interpretability and struggles with hyperparameter selection. In this paper, we propose a robust non-linear PCA framework that unifies the interpretability of PCA with the flexibility of neural networks. Our method parametrizes variable transformations via neural networks, optimized using Evolution Strategies (ES) to handle the non-differentiability of eigendecomposition. We introduce a novel, granular objective function that maximizes the individual variance contribution of each variable providing a stronger learning signal than global variance maximization. This approach natively handles categorical and ordinal variables without the dimensional explosion associated with one-hot encoding. We demonstrate that our method significantly outperforms both linear PCA and kPCA in explained variance across synthetic and real-world datasets. At the same time, it preserves PCA’s interpretability, enabling visualization and analysis of feature contributions using standard tools such as biplots. The code can be found on GitHub\footnote{\url{https://github.com/pinouche/nonlinear-PCA}}.
\end{abstract}

\begin{CCSXML}
<ccs2012>
 <concept>
 <concept_id>10010147.10010257.10010293.10010294</concept_id>
  <concept_desc>Computing methodologies~Dimensionality reduction and manifold learning</concept_desc>
  <concept_significance>500</concept_significance>
 </concept>
 <concept>
  <concept_id>10010147.10010257.10010293.10010295</concept_id>
  <concept_desc>Computing methodologies~Principal component analysis</concept_desc>
  <concept_significance>300</concept_significance>
 </concept>
 <concept>
  <concept_id>10010147.10010257.10010258.10010260</concept_id>
  <concept_desc>Computing methodologies~Neural networks</concept_desc>
  <concept_significance>100</concept_significance>
 </concept>
 <concept>
  <concept_id>10010147.10010257.10010321</concept_id>
  <concept_desc>Computing methodologies~Evolutionary algorithms</concept_desc>
  <concept_significance>100</concept_significance>
 </concept>
</ccs2012>
\end{CCSXML}

\ccsdesc[500]{Computing methodologies~Dimensionality reduction and manifold learning}
\ccsdesc[300]{Computing methodologies~Principal component analysis}
\ccsdesc[100]{Computing methodologies~Neural networks}
\ccsdesc[100]{Computing methodologies~Evolutionary algorithms}

\keywords{
dimensionality reduction,
evolution strategies,
neural networks,
principal component analysis
}



\maketitle

\section{Introduction}

PCA \cite{pearson1901liii} is a widely used dimensionality reduction technique that seeks a set of orthogonal linear transformations of the original data for which variance is maximized. Arguably, its popularity is mainly due to its deterministic nature and the interpretability of its solution. The latter can be attributed to three factors: (i) the orthogonality of the principal components (PCs), (ii) the linear relationship between the variables, and (iii) the explained variance associated to each PC which provides us with an intuitive understanding of the importance of each transformation and how easily (linearly) reducible a dataset can be. However, an obvious drawback of PCA is that it can only uncover linear relationships between the original variables. In other words, PCA treats potentially nonlinear structure in the data as a lack of structure altogether.

\FloatBarrier

\begin{figure}[H]
    \centering
    \includegraphics[width=1.0\linewidth]{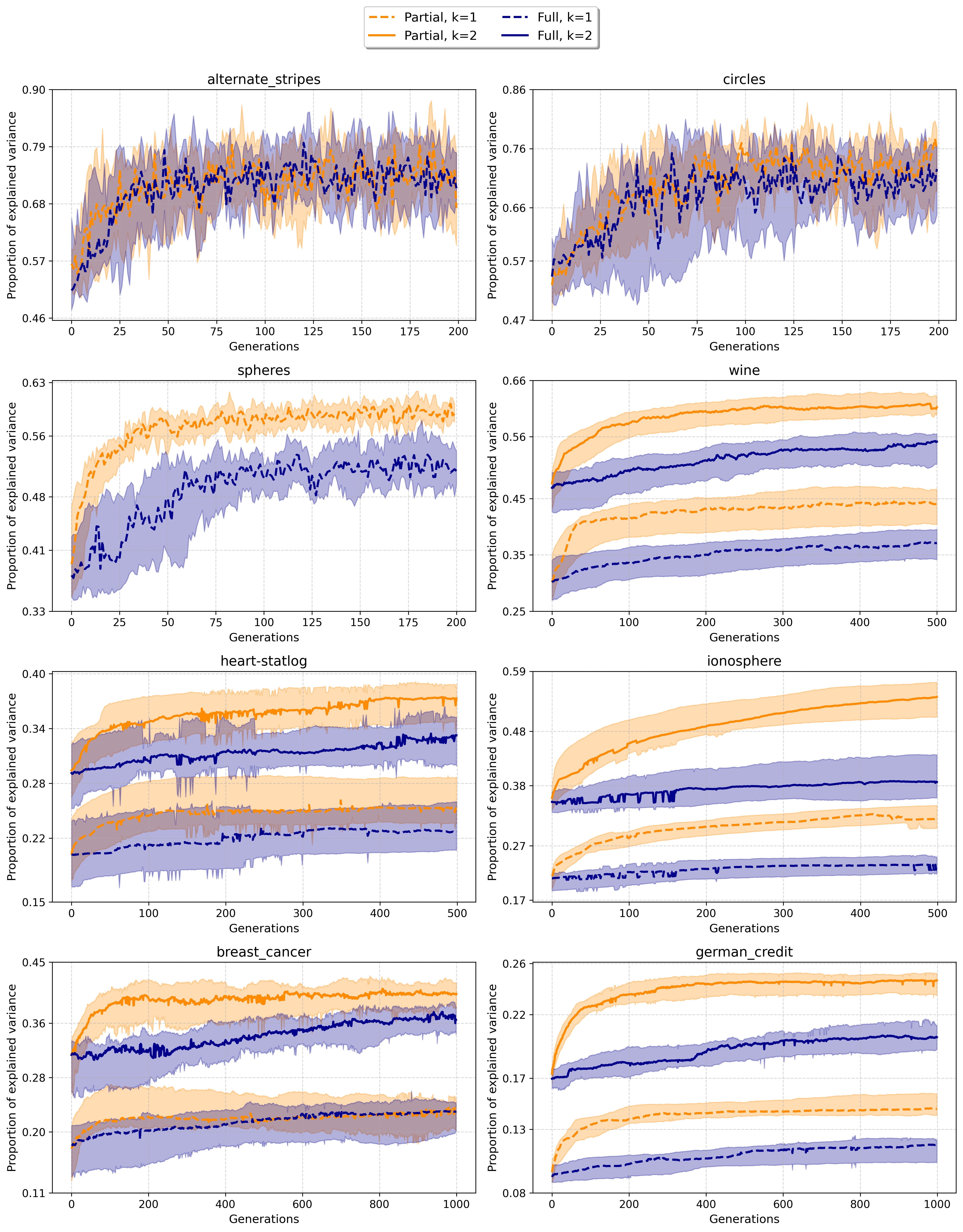}
    \caption{Proportion of explained variance across eight datasets on the validation set (0.75/0.25 split). Lines display the median over 15 independent runs while shaded regions represent the 20th and 80th percentiles. The orange curves illustrate the explained variance achieved using the novel \textit{partial} contribution objective (Equation \ref{eqn:contrib_objective}), where individual variable contributions are optimized separately. The blue curves show the variance achieved using the \textit{total} explained variance objective (Equation \ref{eqn:total_variance_obj}). Dashed lines ($k=1$) represent the proportion of variance accounted for by the first eigenvalue alone, while the solid lines ($k=2$) represent the cumulative variance of the first two eigenvalues. The value of $k$ is also impacting the objectives defined in Equation \ref{eqn:contrib_objective} and Equation \ref{eqn:total_variance_obj} where the optimization is targeted to maximize the variance up to the $k^{th}$ eigenvalue without consideration for subsequent components. Note that for the \textit{alternate\_stripes}, \textit{circles}, and \textit{spheres} datasets, only the $k=1$ case is considered. Across all benchmarks, the \textit{partial} contribution method consistently outperforms the overall variance objective, achieving higher explained variance and faster convergence.}
    \label{fig:training_curves}
\end{figure}

A number of non-linear PCA (NLPCA) methods have been proposed to tackle the aforementioned non-linearity issue. Generally, NLPCA methods can be grouped into three main research branches, as described in a review \cite{kruger2008developments}. Before describing each research branch, one should note that the labeling as NLPCA can be misleading and confusing as the two first branches cannot be considered as natural non-linear extensions of PCA.
The first branch focuses on principal curves and manifolds \cite{hastie1989principal}, which minimize the sum of squared projections onto a non-linear curve. While conceptually elegant, this method is computationally expensive ($O(n^2)$ complexity) and does not yield a model that can be used to make predictions on new data. More importantly, principal manifolds do not offer the same intrinsic variance maximization properties as in linear PCA, lacking interpretability. The second main line of research in NLPCA is the use of auto-associative neural networks \cite{kramer1991nonlinear, tan1995reducing} (i.e., auto-encoders), where the bottleneck layer can be interpreted as the resulting lower-dimensional representation of the original data. In the special case of linear activation functions and appropriately chosen encoding and decoding weight matrices, such networks reduce exactly to PCA. Moving beyond this linear regime, however, the learned latent variables generally do not retain the defining properties of linear PCA (variance maximization, orthogonality of the components, or easily interpretable relationships with the original variables). Furthermore, training such networks may be difficult in low-data regimes. The third and final area of research in NLPCA consists in a non-linear extension of PCA, called kernel PCA (kPCA) \cite{scholkopf1998nonlinear}. This method aims at first computing the kernelized Gram matrix and performing linear PCA on it. It is argued that kPCA is the only true extension of linear PCA \cite{kruger2008developments}, as it inherits all the properties of linear PCA, and, in the case of a linear mapping, is equivalent to linear PCA. In kPCA, however, it is difficult to interpret the solution as the principal components do not relate to the transformed variables in the feature space. In addition, choosing the right kernel function and correctly setting its hyperparameters is more of an art than a science. In fact, in an extensive review of NLPCA \cite{kruger2008developments}, the authors argue that having a more general framework with parameterized transformations has been understudied and would be a useful contribution to the field of NLPCA. This is the gap that we aim to address in this work.

Finally, a further major drawback of PCA and the existing NLPCA methods is their inability to deal with categorical or ordinal variables. Indeed, while one could still convert a categorical column to a one-hot encoding format, in the resulting vector space, each level (subcategory) would be orthogonal and equidistant from each other. However, it would make sense for some levels to be closer to each other in terms of distance metric, depending on what their correlations with other variables are, for a given dataset. In the case of non-curated, real-world data, levels representing the same concept can be recorded differently (e.g., \textit{restaurant} and \textit{eatery}), leading to an explosion of the number of unique levels \cite{cerda2018similarity}. This becomes prohibitive in PCA, as each one-hot encoded column would contribute to one unit of variance, thus dominating the variance contribution of numerical variables.

Thus, on top of the non-linear element, one of the main motivations and contributions of this work is to propose a method that can natively deal with numerical, categorical, and ordinal variables simultaneously. Our method draws from several classical statistical methods such as PCA, Correspondence Analysis (CA) \cite{hirschfeld1935connection}, and Multiple Factor Analysis (MFA) \cite{thurstone1931multiple, escofier1998analyses}. In summary, the main contributions of this paper are as follows.

\begin{itemize}
    \item \textit{Non-Linear Extension via Neuroevolution}: We propose a robust non-linear PCA framework where variable transformations are parameterized by neural networks and optimized using Evolution Strategies. This allows for arbitrary complex mappings while bypassing the differentiability constraints of standard eigendecomposition.
    \item \textit{Granular Variance Maximization}: We introduce a novel \textit{partial} objective function that decomposes global explained variance to maximize the specific contribution of each variable individually. We demonstrate that this granular learning signal significantly outperforms standard global variance maximization, particularly in higher dimensions.
    \item \textit{Unified Data Handling}: Our method natively handle categorical, ordinal, and numerical variables without the dimensional explosion associated with one-hot encoding. This effectively bridges the gap between Kernel PCA (which struggles with categories) and traditional component-based methods like MFA (which assume linearity).
\end{itemize}

\section{Related Work}
\label{sec:handling_categories}

\subsection{Optimal Quantification}

In \cite{linting2007nonparametric}, the authors give an account of another type of nonlinear PCA (NLPCA) based on optimal quantification (OQ), also known as optimal scaling \cite{de1987nonlinear}. In OQ, the goal is to convert each categorical variable to numerical by assigning each level within a categorical to a unique numerical value. Only then can linear PCA be applied to the transformed data to draw insights. Optimality is achieved through the use of an iterative algorithm for which the goal is to maximize the explained variance, similarly to our method. However, this method has major drawbacks that our proposed work addresses. That is, the NLPCA in \cite{linting2007nonparametric} is only non-linear in the way it quantifies categories and does not deal with nonlinear transformations of numerical variables. Furthermore, the nature of the transformations have to be specified manually (e.g., spline, piece-wise function). On the other hand, our proposed method not only transforms numerical variables on top of categorical (and ordinal) ones, but it does so with arbitrary complex transformations parametrized by neural networks and optimized using approximate gradient information. 

\subsection{Correspondence Analysis}

\label{sec:CA}

As mentioned in the Introduction, PCA is not well suited to deal with categorical variables. A popular and analogous method to PCA that deals with two categorical variables is correspondence analysis (CA) and its higher-dimensional variant multiple correspondence analysis (MCA) \cite{greenacre2006multiple}. Multiple correspondence analysis is applied to the contingency table of those categorical variables, where the rows and columns correspond to the levels (subcategories) of the categorical variables. The entries of the table are simply the total number of co-occurrences for the given levels (conditional frequencies).  Similarly to PCA, MCA finds orthogonal components that are used to project the data onto and for which the resulting projection maximizes the weighted variance, called inertia in CA. However, while MCA is a popular method, it has two major drawbacks that our proposed method aims to tackle. Firstly, the multitude of categories and levels within categories leads to an exponential explosion and to a contingency table with mostly low frequencies, skewing the results of the analysis, and rendering MCA unstable. Secondly, MCA does not natively handle continuous variables. To use MCA on a continuous variable, one would have to convert it to a categorical variable by binning the data. However, the binning process is arbitrary, leads to a loss of information, and does not allow to interpolate the data. 


\subsection{Multiple Factor Analysis}
\label{sec:MFA}

Multiple factor analysis (MFA) \cite{thurstone1931multiple} is a multivariate statistical method that acts on groups of variables and that can handle both groups of continuous and categorical variables simultaneously, in the same analysis. 
The aim of MFA is to study the similarity between individuals with respect to the set of variables, as well as the relationships between variables. Just like PCA and MCA, it outputs a simultaneous lower dimensional representation of the individuals and the variables (continuous or categorical) that can then be interpreted on the same plot. In particular, MFA can be seen as a mix of PCA (applied to continuous variables) and MCA (applied to categorical variables) such that the influence of each variable, or group of variables, is weighed in order not to dominate the component analysis. Practically, however, when performing MFA, each categorical variable has to be one-hot encoded which can lead to an explosion of the number of columns in the case of non-curated real-world datasets. On the other hand, our method allows each category to be represented by a single column in our analysis, where semantically close levels (subcategories) are mapped close to each other.

\section{Proposed Method for Non-Linear PCA}

Our method consists of three main components: (i) the parameterization of the non-linear transformations of the original variables, (ii) the objective function to optimize, and (iii) the optimization procedure used to maximize the proposed objective. Since PCA is at the backbone of our proposed objectives, we briefly introduce it, alongside the notation used throughout the paper, before diving into the 3 aforementioned components.

\subsection{Principal Components Analysis}

Let us denote the original data by $X \in \mathbb{R}^{n \times p}$, where $X^{(l)} \in \mathbb{R}^{n}$ represents the $j^{\textrm{th}}$ dimension (variable). We assume that the data is centered and standardized (that is, each column has zero mean and unit standard deviation). Finally, we denote the sample covariance matrix of $X$ as $S = \textrm{Cov}(X)$, where $S \in \mathbb{R}^{p \times p}$, and $S_{i,j} = \sigma_{i,j}$. PCA seeks an orthonormal basis $w_1, w_2, \cdots , w_p$ $\in \mathbb{R}^{p}$ to project the original data onto, so the variance $\textrm{Var}(Xw_j)$, $j=1,\ldots,p$ of the projection is maximized. Solving this optimization problem yields a matrix $W \in \mathbb{R}^{p \times p}$ whose rows $w_1, \ldots , w_p$ are to the eigenvectors of the covariance matrix $S$. The corresponding eigenvalues $\lambda_1, \ldots, \lambda_p$ represent the variance of each linear projection $Xw_j, j=1,\ldots,p$ , such that $\lambda_1 \leq \ldots \leq \lambda_p$. The data projected onto the principal components is denoted by $Z=XW$, where $Z \in \mathbb{R}^{n \times p}$.

\subsection{Non-linear Transformations}

In this paper, we consider transformations of the original variables where each original variable is mapped to a single transformed output. Formally, for each variable $X^{(l)}$, we define a function $\Phi_{l}: \mathbb{R} \to \mathbb{R}$, where $\Phi_{l}(X^{(l)}) = \widetilde{X}^{(l)}$, as depicted in Figure \ref{fig:transformations}. We choose to have a one-to-one mapping between the original variables and their transformations for two reasons: (i) interpretability of the solution, where each transformation can be referred back to its corresponding original variable, and (ii) we avoid having to deal with the trivial solution where all the transformations are colinear to each other, maximizing the first eigenvalue. Note that the special case where all the transformations have a constant output cannot happen as this would lead to a total variance of 0. Then, by concatenating the output of each transformation, we obtain the transformed dataset $\widetilde{X} \in \mathbb{R}^{n \times p}$.

\begin{figure}[ht!]
    \hspace{0.6cm}\includegraphics[width=0.8\linewidth]{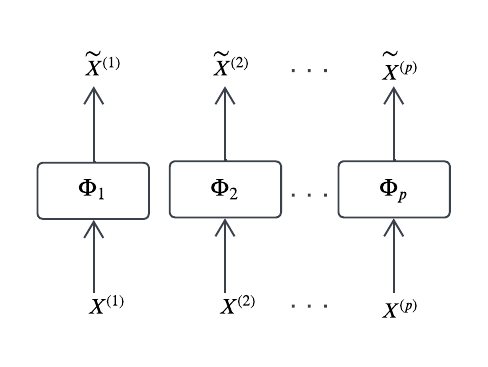}
    \vspace{-0.5cm}
    \caption{Diagram representation of the transformations of the original variables.}
    \label{fig:transformations}
\end{figure}

In this work, we choose to parameterize the transformations $\Phi_{l}$, for $l=1,\ldots,p$, as simple feed-forward neural networks to express nonlinear and arbitrary transformations. The architecture and parameters of the networks vary depending on the variable type that is being transformed (numerical vs categorical/nominal vs ordinal). The network architecture for each variable type is given in Appendix \ref{appendix:nn}.

\subsubsection{Handling of Categorical Variables}

For categorical variables, our method first one-hot encode each sub-level individually. Then, instead of passing a single variable (as would be the case for numerical variables) as input through the transformations $\Phi_{l}$, $l=1,\ldots,p$, we pass the one-hot encoded vector. The output, on the other hand, similarly to the numerical case, remains one-dimensional. Thus, each categorical variable contributes to a single column in the newly transformed space, the same as a numerical variable. This allows us to perform PCA directly on the newly transformed data without having to weigh variables in order to avoid a single variable dominating the component analysis. Recall that in MFA, each categorical variable is treated as a one-hot encoded vector leading to an explosion of the number of columns, which is not the case for our method. 

Finally, one can interpret the resulting scalar value for each sub-level: levels that are often associated (co-occurring) with the same numerical variables should be mapped close to each other. To represent a particular level within a category, we can simply compute the center of gravity (average) of the coordinates for all the individuals for that level. Therefore, similarly to CA and MCA, we can simultaneously represent the individuals and the levels on the same lower-dimensional plot.

\subsubsection{Notes on Overfitting}

It is possible for neural networks to overfit the data by arbitrarily mapping each point to maximize the variance when applying PCA. To this end, typical neural network regularization techniques (e.g., weight decay, dropout, limiting network size) can be readily applied. In addition, PCA is sensible to outliers and can thus be skewed by transformations of the original input that create artificial outliers or amplify already existing outliers. This effect could be alleviated, for instance, by using robust PCA \cite{candes2011robust} or by removing the outliers post-transformation. In addition, for datasets with a large number of variables, sparse PCA \cite{zou2006sparse} which yields a sparse representation of the loading vectors by applying an $\textit{l}_1$ penalty would be worth investigating.

\subsubsection{Notes on Interpretability}

Because one of the most popular features of PCA is its interpretability, we show that our method can be made interpretable by choosing an interpretable function class for the transformations $\Phi_{l}$, $l=1,\ldots,p$. Specifically, for the synthetic datasets in Section \ref{sec:synthetic_datasets}, in addition to neural networks, we also experiment using computational trees (shown in Figure \ref{fig:gp_diagram} in Appendix A). These trees are optimized using standard Genetic Programming (GP) \cite{koza1992genetic}, set a to a low depth to obtain simple and interpretable expressions. Another common way would be to fit an interpretable surrogate model, to explain a more complex model, in a post-hoc manner \cite{ribeiro2016should}.

\subsection{Proposed Objectives}
\label{sec:objective}

We propose two simple objectives to maximize, based on the amount of explained variance of the first $k$ principal components after having applied PCA to the transformed data $\widetilde{X}$ and obtaining the eigenvalues $\{\lambda_{j}\}_{j=1}^{p}$. Note that $k < p$ is a hyperparameter and should be set by the user depending on the problem.

\subsubsection{Maximizing the global Explained Variance}

The simplest objective is to maximize the sum of the variance explained by the first $k$ principal components. We denote this objective by $\mathcal{F}_{\textrm{global}}^{k}$ and define it as

\begin{equation}
    \label{eqn:total_variance_obj}
    \mathcal{F}_{\textrm{global}}^{k} = \sum_{j=1}^{k} \lambda_{j}.
\end{equation}

This objective makes it clear why each transformation $\Phi_{l}$ can only be a function of its corresponding original variable $X^{(l)}$. Indeed, had we allowed each transformation to be a function of all the original variables (i.e, $\Phi_{l} = f(X^{(1)}, \ldots, X^{(p)})$), then the trivial (and uninteresting) solution $\Phi_{l}	\propto \Phi_{m}, \forall \hspace{0.1cm} l, m$ would lead to the first principal component explaining 100\% of the variance contained in the transformed dataset. In addition, restricting $\Phi_{l}=f(X^{(l)})$ also has the advantage of being more interpretable, whereby each transformed variable is a function of its original self. This naturally allows us to use standard PCA analysis tools, such as biplots, to relate the transformed variables and data points to each other.

\subsubsection{Maximizing the Individual Variance Contributions}
\label{sec:variance_contrib}

Although the aforementioned objective is sensible, it assigns the same value to all the transformations shown in Figure \ref{fig:transformations}. However, it is conceivable that some transformations may be better than others, contributing to more variance. Therefore, we would like to have an objective that assigns a different value to each individual transformation. One possible solution is to single out the variance contribution towards the $k$ first eigenvalues of each transformed variable.

\begin{proposition}
\label{prop:1}
Let $c_{j,l}$ be the variance contribution of the $l^{\textrm{th}}$ variable $X^{(l)}$ towards the $j^{\textrm{th}}$ eigenvalue $\lambda_j$. Then, we have that 

\begin{equation}
\label{eqn}
c_{j,l} = w^2_{j,l}\sigma_{l,l} + \sum_{i=1, i \neq l}^{p}w_{j,i}w_{j,l}\sigma_{i,l}.
\end{equation}
\end{proposition}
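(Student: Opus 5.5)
The plan is to write the $j^{\textrm{th}}$ eigenvalue as the variance of the $j^{\textrm{th}}$ projection and expand it as a quadratic form, then group the terms by variable index $l$. Here $w_j$ is the unit eigenvector of $S$ associated with $\lambda_j$; the same computation applies verbatim to the transformed data $\widetilde{X}$ with its covariance matrix. The first step is to note that $\lambda_j = \textrm{Var}(Xw_j) = w_j^\top S w_j$. Since the data are centered, $\textrm{Var}(Xw_j) = \frac{1}{n-1}w_j^\top X^\top X w_j = w_j^\top S w_j$. The identity $w_j^\top S w_j = \lambda_j$ then follows from $Sw_j = \lambda_j w_j$ and $w_j^\top w_j = 1$.

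The second step is to expand the quadratic form entrywise:
\begin{equation*}
\lambda_j = \sum_{l=1}^{p}\sum_{i=1}^{p} w_{j,i} w_{j,l}\,\sigma_{i,l}.
\end{equation*}
This is a double sum over ordered pairs $(i,l)$. I then define the contribution of variable $l$ as the part of this sum with outer index $l$, namely $c_{j,l} = \sum_{i=1}^{p} w_{j,i}w_{j,l}\sigma_{i,l}$. This is equivalently $w_{j,l}(Sw_j)_l$, i.e.\ the covariance between the scaled variable $w_{j,l}X^{(l)}$ and the projection $Xw_j$. Splitting off the diagonal term $i=l$ gives $w_{j,l}^2\sigma_{l,l}$ plus the sum over $i\neq l$, which is exactly the stated formula. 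Summing over $l$ shows $\sum_{l} c_{j,l} = \lambda_j$, so this is a genuine decomposition of the eigenvalue.

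The only real obstacle is that ``variance contribution'' is not defined before the statement, so the proof must fix a definition. I would justify the choice via the covariance decomposition $\textrm{Var}(Xw_j) = \sum_l \textrm{Cov}(w_{j,l}X^{(l)}, Xw_j)$, which follows from the bilinearity of covariance. Under this definition each cross term $\sigma_{i,l}$ is shared symmetrically between variables $i$ and $l$.

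As a remark, using the eigen-equation one also gets $c_{j,l} = \lambda_j w_{j,l}^2$. This closed form is nonnegative and makes the nonnegativity of the contributions transparent. I would state it after the main proof as a consistency check.
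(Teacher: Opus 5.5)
Your argument is correct and is essentially the paper's own: both expand $\lambda_j = w_j^\top S w_j$ into $\sum_{i}\sum_{l} w_{j,i}w_{j,l}\sigma_{i,l}$ and collect the terms belonging to variable $l$. Your grouping over ordered pairs is the same as the paper's even split of each symmetric cross term $2w_{j,i}w_{j,l}\sigma_{i,l}$ between $i$ and $l$. You also make the definition explicit as $c_{j,l} = \textrm{Cov}(w_{j,l}X^{(l)}, Xw_j)$ and derive the closed form $c_{j,l} = \lambda_j w_{j,l}^2$ from $Sw_j = \lambda_j w_j$; both are correct additions that the paper leaves out.
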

\begin{proof} 
The result directly follows from decomposing the eigenvalue:

\begin{equation} \label{eq:eigenvalue}
\begin{split}
\lambda_j & = \textrm{Cov}(w_jX) \\
 & = \sum_{i=1}^{p}\sum_{l=1}^{p}w_{j,l}w_{j,i}\sigma_{i,l} \\
 & = \sum_{i=1}^{p}w^{2}_{j,i}\sigma_{i,i} + 2 \sum_{i=1}^{p-1} \sum_{l=i+1}^{p} w_{j,i}w_{j,l}\sigma_{i,l}. 
\end{split}
\end{equation}

The first term corresponds to the variance, while the second term corresponds to the $p \choose 2$ covariance pairs, where the factor of 2 comes from the fact that $S$ is symmetric. Then, by extracting the contribution of a single variable $l$ from Equation \ref{eqn:total_variance_obj}, we obtain the required result.
\end{proof}

The result of Proposition \ref{prop:1} can then be used to calculate the contributions $\{c_{j,l}\}_{j,l=1}^{p}$ of each variable for all components. This formulation allows us to optimize the variance contribution of each transformation $\Phi_{l}$ individually:

\begin{equation}
\label{eqn:contrib_objective}
    \mathcal{F}_{l}^{k} = \sum_{j=1}^{k} c_{j,l},
\end{equation}

as opposed to only having access to the overall explained variance $\mathcal{F}_{\textrm{global}}^{k}$. This formulation is our new proposed objective. We show in Proposition \ref{prop:2} that it is a fine-grained version of the total explained variance in Equation \ref{eqn:total_variance_obj}. Throughout the rest of this paper, we refer to this formulation as the \textit{partial} objective. Furthermore, we refer to the regular formulation in Equation \ref{eqn:total_variance_obj} as the \textit{partial} objective.

\begin{proposition}
\label{prop:2}
The formulation in Equation \ref{eqn:contrib_objective} is a more granular version of the global variance objective in Equation \ref{eqn:total_variance_obj} and we have that
\begin{equation}
    \label{eqn:equivalence}
    \mathcal{F}_{\textrm{global}}^{k} = \sum_{l=1}^{p} \mathcal{F}_{l}^{k}.
\end{equation}
\end{proposition}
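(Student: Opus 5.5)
The plan is to sum the contributions from Proposition \ref{prop:1} over $l$ and recognise the quadratic-form expansion of each eigenvalue. Since $\mathcal{F}_{l}^{k} = \sum_{j=1}^{k} c_{j,l}$ and both sums are finite, I will swap the order of summation:
\begin{equation*}
\sum_{l=1}^{p} \mathcal{F}_{l}^{k} = \sum_{j=1}^{k} \sum_{l=1}^{p} c_{j,l}.
\end{equation*}
It then suffices to show that $\sum_{l=1}^{p} c_{j,l} = \lambda_j$ for each fixed $j \le k$. Summing over $j=1,\ldots,k$ gives $\mathcal{F}_{\textrm{global}}^{k}$ by Equation \ref{eqn:total_variance_obj}.

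For fixed $j$, I will substitute the expression from Proposition \ref{prop:1}. The diagonal part $w_{j,l}^2\sigma_{l,l}$ and the off-diagonal part $\sum_{i\neq l} w_{j,i}w_{j,l}\sigma_{i,l}$ combine into $\sum_{i=1}^{p} w_{j,i}w_{j,l}\sigma_{i,l}$. Hence
\begin{equation*}
\sum_{l=1}^{p} c_{j,l} = \sum_{l=1}^{p}\sum_{i=1}^{p} w_{j,i}w_{j,l}\sigma_{i,l} = w_j^\top S w_j .
\end{equation*}
This is exactly the second line of Equation \ref{eq:eigenvalue}. Since $w_j$ is a unit eigenvector of $S$ (with $S$ computed from $\widetilde{X}$), we have $w_j^\top S w_j = \lambda_j \|w_j\|^2 = \lambda_j$.

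The one point needing care is the bookkeeping of the cross terms. Each unordered pair $\{i,l\}$ with $i\neq l$ appears twice across the $c_{j,\cdot}$: once in $c_{j,l}$ as $w_{j,i}w_{j,l}\sigma_{i,l}$, and once in $c_{j,i}$ as $w_{j,l}w_{j,i}\sigma_{l,i}$. By the symmetry $\sigma_{i,l}=\sigma_{l,i}$ of $S$, these two copies reproduce the factor $2$ in the third line of Equation \ref{eq:eigenvalue}. So the decomposition of Proposition \ref{prop:1} splits each covariance pair equally between its two variables and neither double-counts nor drops any term. Everything else is finite rearrangement, so the identity holds for any $k\le p$.
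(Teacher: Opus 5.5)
Your proof is correct and takes the same route as the paper: swap the two finite sums, then use $\sum_{l=1}^{p} c_{j,l} = \lambda_j$. The paper only asserts that last identity ``from the definition of $c_{j,l}$'', whereas you prove it by recombining $c_{j,l}$ into $\sum_{i} w_{j,i}w_{j,l}\sigma_{i,l}$ and using $w_j^\top S w_j = \lambda_j$ for a unit eigenvector; that fills in the one step the paper leaves implicit.
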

\begin{proof} 
Substituting Equation \ref{eqn:contrib_objective} into the RHS of Equation \ref{eqn:equivalence}, we get 
\begin{equation}
\nonumber
    \sum_{l=1}^{p} \mathcal{F}_{l}^{k} = \sum_{l=1}^{p}\sum_{j=1}^{k} c_{j,l} = \sum_{j=1}^{k}\sum_{l=1}^{p} c_{j,l} = \sum_{j=1}^{k} \lambda_{j},
\end{equation}
where the last equality comes from the definition of $c_{j,l}$.
\end{proof}

Intuitively, this formulation should provide a stronger learning signal since it is able to inform the optimization for each transformation individually, rather than when using the global variance objective.

\subsubsection{Scaling the Global Explained Variance}
\label{sec:scaling_total_variance}

The \textit{global} objective in Equation \ref{eqn:total_variance_obj} corresponds to the total variance explained by all the variables. On the hand, the \textit{partial} objective is granular: each variable is responsible for only a fraction of the total explained variance. Proposition \ref{prop:2} implies that on average (assuming each variable contributes to the same amount of variance) we have that $\mathcal{F}_{\textrm{global}}^{k} = p\mathcal{F}^{k}$, where $\mathcal{F}^{k}$ is the variance contribution of a single variable, for the $k^{th}$ eigenvalue. This directly affects the magnitude (by a factor of $p$) of the learning step in evolution strategies. Therefore, to make sure that the step size is of same magnitude for both the \textit{partial} and \textit{global} objectives, we scale down $\mathcal{F}_{\textrm{global}}^{k}$ by a factor of $p$, as shown in Algorithm 1. This control ensures that any observed advantages are a result of the \textit{partial} objective's strong and robust signal, rather than an artifact of differing optimization scales.

\subsection{Optimization using Evolution Strategies}
\label{sec:optimization}

Since the eigendecomposition of the covariance matrix is not a differentiable computation, the objectives proposed in \ref{sec:objective} are thus non-differentiable and we cannot compute their derivatives. Furthermore, our network does not output a distribution over actions to be sampled from and evaluated, rendering standard reinforcement learning (RL) algorithms such as Policy Gradients \cite{sutton1999policy} not suitable. To address this problem, we adopt a class of derivative-free black-box optimization algorithms, called Evolution Strategies (ES) \cite{rechenberg1978evolutionsstrategien}. At the core, ES are iterative population-based methods used to update the parameters of a solution (a neural network in our case) by perturbing its parameters, computing the objective value for each perturbation, updating the parameters through gradient approximation, and iterating the process until some stopping criterion is met. 
In a recent work \cite{salimans2017evolution}, the authors adapted a version of ES, called Natural Evolution Strategies (NES) \cite{wierstra2014natural, sun2009efficient} to successfully optimize the weights of relatively large neural networks to play Atari games. In essence, ES is akin to approximating the first-order derivative by using randomized (for the chosen direction) finite differences in high dimension. Although approximating the gradient in high dimension using finite differences may seem unfeasible due to the curse of dimensionality, the authors argue that what matters is the intrinsic dimension of the optimization problem and not the number of dimensions of a potentially overparameterized neural network \cite{salimans2017evolution}.

Formally, let us denote the parameters of our neural network by $\theta$, where we assume that $\theta \sim p_{\mu}(\theta)=N(\mu, I\sigma^2)$, for $\sigma^2$ fixed. The goal is to maximize the average objective value $\mathbb{E}_{\theta \sim p_{\mu}}\mathcal{F}(\theta)$ over the whole population by optimizing for $\mu$ with stochastic gradient ascent. Note that we do not have to update $\sigma$ as it is fixed. Therefore, for each stochastic update, we have to compute $\nabla_{\mu} \mathbb{E}_{\theta \sim p_{\mu}}\mathcal{F}(\theta)$. This gradient is in fact the score function estimator, also known as the REINFORCE estimator \cite{williams1992simple} and can be written as 

\begin{equation}
    \nabla_{\mu} \mathbb{E}_{\theta \sim p_{\mu}}\mathcal{F}(\theta) = \mathbb{E}_{\theta \sim p_{\mu}}[\mathcal{F}(\theta)\nabla_{\mu}\textrm{log}(p_{\mu}(\theta))].
\end{equation}

Now, since $\nabla_{\mu}\textrm{log}(p_{\mu}(\theta))=\frac{(\theta-\mu)}{\sigma^2}$. and $\theta = \mu + \epsilon \sigma$, $\epsilon \sim N(0, I)$, we can write $\nabla_{\mu}\textrm{log}(p_{\mu}(\theta)) = \frac{\epsilon}{\sigma}$. We also have that $\mathbb{E}_{\theta \sim p_{\mu}}\mathcal{F}(\theta) = \mathbb{E}_{\epsilon \sim N(0, I)}\mathcal{F}(\theta + \epsilon \sigma)$.  Putting all these pieces together, we obtain

\begin{equation}
\nabla_{\mu} \mathbb{E}_{\theta \sim p_{\mu}}\mathcal{F}(\theta) = \frac{1}{\sigma} \mathbb{E}_{\epsilon \sim N(0, I)}[\mathcal{F}(\theta + \epsilon \sigma)\epsilon],
\end{equation}

which can be estimated using Monte Carlo sampling \cite{sehnke2010parameter}. The general algorithm to implement the Monte Carlo sampling is described in Algorithm \ref{alg:evolution-strategies}. At each step $t$, we perturb the network parameters by injecting normal noise, $P$ independent times, and compute the objective for each perturbation. Then, we update the parameters by taking a step in the direction of the average weighted (by the objective value) noise samples. 

Note that as a shorthand, in Algorithm \ref{alg:evolution-strategies}, we use $\theta$ to denote the set of all parameters $\{\theta^{l}\}_{l=1}^{p}$ corresponding to the set of neural networks $\{\Phi_{l}\}_{l=1}^{p}$. In the case where $\mathcal{F}_{\textrm{global}}^{k}$ (see Equation \ref{eqn:total_variance_obj}) is used as an objective function, all the network parameters can be updated at the same time since they share the same objective function (and thus have the same gradient). This can be seen in line 6 of Algorithm \ref{alg:evolution-strategies}. On the other hand, for $\mathcal{F}_{l}^{k}$ (see Equation \ref{eqn:contrib_objective}), we have to update each network parameters $\theta^{l}$ separately using a different gradient approximation for each network. A more detailed description of the training procedure using ES as well as the values of the ES parameters are given in Appendix \ref{appendix:es}.

\begin{algorithm}[tb]
\caption{Evolution Strategies for non-linear PCA}
\label{alg:evolution-strategies}

\begin{algorithmic}[1]

\STATE \textbf{Input:} number of generations: $T$, learning rate: $\alpha$, standard deviation: $\sigma$, population size: $P$, initial networks parameters: $\{\theta_{0}^{l}\}_{l=1}^{p}$, number of principal components: $k$, partial objective: $is\_partial$.

\FOR{$t = 0, \dots, T-1$}
\STATE Sample $\epsilon_1, \ldots, \epsilon_P \sim N(0, I).$
\IF{not $is\_partial$}
\STATE $\mathcal{F}_{\textrm{global},i}^{k}=\mathcal{F}_{\textrm{global}}^k(\theta_t + \sigma \epsilon_i)$, \hspace{0.1cm} for $i=1,\ldots,P$.
\STATE $\theta_{t+1} \leftarrow \theta_{t} + \frac{\alpha}{pP\sigma} \sum_{i=1}^{P}\mathcal{F}_{\textrm{global},i}^{k}\epsilon_i$
\ELSE
\FOR{$l = 1, \dots, p$}
\STATE $\mathcal{F}_{l, i}^{k}=\mathcal{F}_{l}^k(\theta_t + \sigma \epsilon_i)$, \hspace{0.1cm} for $i=1,\ldots,P$.
\STATE $\theta_{t+1}^{l} \leftarrow \theta_{t} + \frac{\alpha}{P\sigma} \sum_{i=1}^{P}\mathcal{F}_{l,i}^{k}\epsilon_i$
\ENDFOR
\ENDIF
\ENDFOR
\STATE \textbf{Output:} optimized networks parameters: $\{\theta_{T}^{l}\}_{l=1}^{p}$.

\end{algorithmic}
\end{algorithm}

\subsection{Computational Complexity}

We analyze the computational complexity of the proposed algorithm. For the cost of the objective function, we focus on the dominant cost of the Singular Value Decomposition (SVD). We let $C_{\text{SVD}} = O(\min(n^{2}p,\, np^{2}))$ denote the computational cost of performing a single SVD operation on a $n \times p$ matrix. The overall complexity of the algorithm depends on objective function being used (global vs. partial) and the frequency of the PCA updates.

\subsubsection{Sequential Complexity}

\paragraph{Global objective function (\texttt{not is\_partial})}
In this setting, the algorithm iterates through $T$ generations. In each epoch $t$, it generates a population of size $P$. For each candidate $i \in \{1, \dots, P\}$, the objective $\mathcal{F}_{\text{global},i}^{k}$ is computed, which requires performing PCA.
The total computational complexity is:
\begin{equation}
    \mathcal{T}_{\text{global}} \approx O(T \cdot P \cdot C_{\text{SVD}}).
\end{equation}

\paragraph{Partial objective function (\texttt{is\_partial})}
In this setting, the algorithm includes an additional nested loop over $p$ partial components. The rest remains the same, and so, we have that the complexity scales linearly with $p$:
\begin{equation}
    \mathcal{T}_{\text{partial}} \approx O(T \cdot p \cdot P \cdot C_{\text{SVD}})
\end{equation}

Now, if the PCA basis is updated only every $m$ generations (instead of at every iteration/perturbation), the number of SVD operations is reduced by a factor of $m$. In the in-between steps, the objective is computed using the cached basis from the most recent SVD. We would have:

\begin{equation}
    \mathcal{T}_{\text{global}} \approx O\left( \frac{T}{m} \cdot P \cdot C_{\text{SVD}} \right).
\end{equation}

\subsubsection{Parallel Complexity}

Evolution Strategies are naturally parallelizable. The population loop (evaluating $i=1, \dots, P$) consists of independent operations that can be distributed across $W$ workers. Let $W$ be the number of parallel workers. The parallel complexity of the algorithm becomes

\begin{equation}
    \mathcal{T}_{\text{parallel}} \approx O\left( T \cdot \left\lceil \frac{P}{W} \right\rceil \cdot C_{\text{SVD}} \right).
\end{equation}

If the event that $W = P$ the complexity per epoch reduces to $O(T \cdot C_{\text{SVD}})$. Note that the inner loop for the \textit{partial} objective is also parallelizable and would thus reduce to the above too.

\section{Datasets}
\label{sec:datasets}

\subsection{Synthetic Datasets}
\label{sec:synthetic_datasets}

As a basis for testing our method, we generate three synthetic datasets (see Appendix \ref{appendix:synthetic_data}): (i) \textit{nested circles}, (ii) \textit{nested spheres}, and (iii), \textit{alternate stripes}. These datasets are devised in such a way that each dimension contributes to an orthogonal source of variance. Therefore, performing linear PCA on the original variables leads to an equal amount of explained variance in each principal component, as shown in Table \ref{tab:results}. On the other hand, there exist transformations that would lead to a high amount of explained variance in the first principal components, as explained in the Appendix.

\subsection{Real-world Data}
\label{exp:real-world-data}


In this section, we give an overview of the five datasets used in the experiments. The datasets are sourced from the machine learning dataset repository $\textit{openml.org}$\footnote{\url{https://openml.org/}}. The filtering used to select the datasets on $\textit{openml.org}$ is the following: (i) \textit{Status}:  \textit{Verified}, (ii) \textit{Instances}:  \textit{1000s}, (iii) \textit{Features}:  \textit{10s}, (iv) \textit{Target}:  \textit{Binary classification or multi-class}. We then select 5 datasets from a wide range of domains (e.g., credit check, meteorological, medical) and attribute types (continuous, boolean, categorical, and a mix of those). The characteristics of these datasets are described in Table \ref{table:datasets} below.

\begin{table}[h!]
\centering
\caption{Description of the real-world datasets used in the experiments.}
\label{table:datasets}
\begin{tabular}{|c|c|c|c|}
\textbf{Datasets} & \textbf{\# instances} & \textbf{\# features} & \textbf{\# classes} \\
\hline
credit-g & 1000 & 21 (14 cat.) & 2 \\
ionosphere & 351 & 35 (1 cat.) & 2 \\
breast-cancer & 286 & 10 (10 cat.) & 2 \\
heart-statlog & 270 & 14 (7 cat.) & 2 \\
wine-dataset & 178 & 13 (1 cat.) & 2  \\
\end{tabular}
\end{table}

\section{Experiments}

\subsection{Training Protocol}

Our experimental setting and training protocol are carefully designed to ensure that there is no data leakage and that the evaluation is robust. In Figure \ref{fig:training_protocol_diagram} below we show a full experimental run. For robustness, we repeat this process 15 times for each dataset with different (random) data splits and weight initialization. To avoid data leakage, we randomly split the data into training and validation set, using a $0.75/0.25$ split. The neural network weights are then optimized using ES (Algorithm 1) on the training set. After each generation, the optimized networks are used to both transform the training and validation data, as depicted in Figure \ref{fig:transformations}. Finally, we perform PCA on the transformed data and report the eigenvalues as in Table \ref{tab:results} and Figure \ref{fig:training_curves}.

\FloatBarrier

\begin{figure}[htb!]
    \centering
    \includegraphics[width=1.1\linewidth]{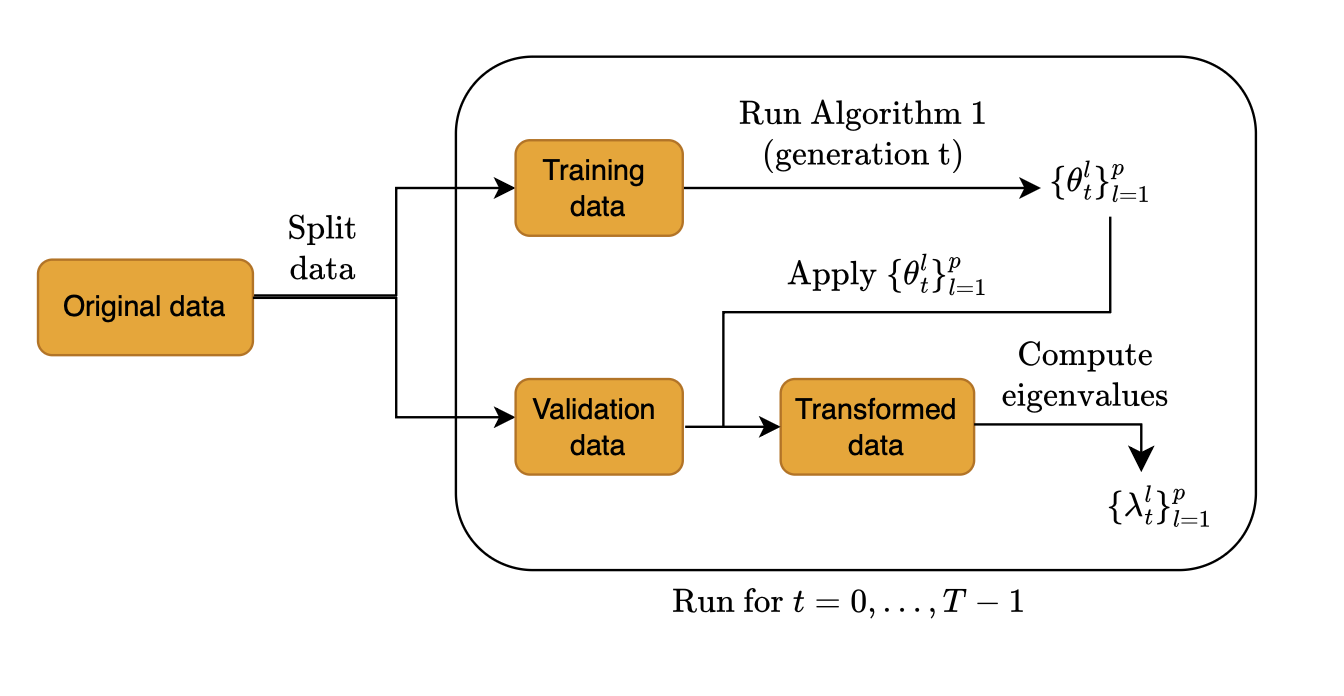}
    \caption{Training and evaluation protocol.}
    \label{fig:training_protocol_diagram}
\end{figure}

\subsection{How to Evaluate our Method?}
\label{sec:evaluation_metric}

There is no standard way to evaluate the quality of of a resulting lower-dimensional representation. The appropriate evaluation metric largely depends on the application and on the end-goal of the practitioner when performing dimensionality reduction. For instance, in \cite{uriot2022genetic}, the authors use both the reconstruction error and the predictive performance of the lower-dimensionality representation in supervised tasks. However, it is reported that both metrics come with their own drawbacks such as putting too much importance on preserving colinear features. 

Therefore, in this paper, in order to keep the comparison as straightforward and fair as possible, we only compare our method to other component-based methods for which the goal is also to find a representation that maximizes the explained variance in the transformed space. This allows us to compare the obtained eigenvalues for the various methods across the datasets.
As mentioned in the introduction, kPCA is the only true non-linear extension of PCA from the NLPCA research field. We can compute and directly compare the explained variance of the few leading principals components obtained in kPCA with those obtained for our method. In addition, as a baseline, we will compute the eigenvalues obtained by applying linear PCA to the original data.

Qualitative assessments such as graphical representations (e.g., biplots) are also important in order to assess the lower-dimensional representation and get an understanding of what lower dimensions represent.

\section{Results}
\label{sec:results}

\subsection{Explained Variance}

In Table \ref{tab:results} below, we can see that our novel objective (Equation \ref{eqn:contrib_objective}), denoted as \textbf{ES-Partial}, is consistently explaining either the most variance, or the second most. In particular, for $k=2$ on the real-world datasets, it explains significantly more variance than the other methods. For $k=1$, both \textbf{ES-Partial} and \textbf{kPCA} perform well. Note that the \textbf{kPCA} comparison is tilted in its favor since we report the best result out of 4 kernels (rbf, cosine, polynomial, sigmoid). In either cases, we do not fine-tune the neural networks or the kernels hyperparameters. We use the default parameters from the Scikit-learn's implementation of KernelPCA \cite{pedregosa2011scikit}. Furthermore, the explained variance is still increasing with the number of generations (see Figure \ref{fig:training_curves}) on several datasets, suggesting that more gains are achievable. As expected, performing PCA on the transformed original data using evolution strategies always explains more variance than linear PCA on the original variables (\textbf{PCA} vs. \textbf{ES-Global} and \textbf{ES-Partial}). Note that for \textbf{PCA}, we can have that the first principal component explains less than $\frac{1}{p}$ of the total variance. This is because we report the results on the validation set while the PCA parameters are fitted on the training set.

\begin{table*}[t]
\caption{Mean explained variance (over 15 independent runs with different training and validation sets), reported on validation data. Results are shown for $k=1$ and $k=2$. The methods are: linear PCA, kernel PCA (best kernel shown in parentheses), and ES-PCA using the \textit{global} and \textit{partial} objectives. Best results per dataset and per $k$ are shown in bold, second best are underlined.}
\vspace{0.15cm}
\label{tab:results}
\centering
\begin{tabular}{l c l c c c l c c}
\toprule
& \multicolumn{4}{c}{\textbf{$k=1$}} & \multicolumn{4}{c}{\textbf{$k=2$}} \\
\cmidrule(lr){2-5} \cmidrule(lr){6-9}
\textbf{Dataset}
& \textbf{PCA}
& \textbf{kPCA}
& \textbf{ES-Global}
& \textbf{ES-Partial}
& \textbf{PCA}
& \textbf{kPCA}
& \textbf{ES-Global}
& \textbf{ES-Partial} \\
\midrule
Alternate stripes
& 0.513
& 0.573 (rbf)
& \textbf{0.715}
& \underline{0.693}
& N/A
& N/A
& N/A
& N/A \\

Circles
& 0.498
& \textbf{0.785} (rbf)
& 0.714
& \underline{0.736}
& N/A
& N/A
& N/A
& N/A \\

Spheres
& 0.341
& \underline{0.576} (rbf)
& 0.517
& \textbf{0.596}
& N/A
& N/A
& N/A
& N/A \\

Wine
& \underline{0.370}
& \underline{0.370} (linear)
& 0.364
& \textbf{0.448}
& \underline{0.607}
& \textbf{0.620} (cos)
& 0.544
& \textbf{0.620} \\

Heart-statlog
& 0.230
& \textbf{0.295} (poly)
& 0.241
& \underline{0.268}
& 0.329
& \underline{0.361} (poly)
& 0.345
& \textbf{0.381} \\

Ionosphere
& 0.256
& \textbf{0.377} (cos)
& 0.238
& \underline{0.325}
& 0.381
& \underline{0.505} (cos)
& 0.399
& \textbf{0.543} \\

Breast cancer
& 0.104
& 0.127 (poly)
& \underline{0.231}
& \textbf{0.245}
& 0.176
& 0.246 (poly)
& \underline{0.375}
& \textbf{0.418} \\

German credit
& 0.067
& 0.070 (cos)
& \underline{0.119}
& \textbf{0.153}
& 0.123
& 0.132 (cos)
& \underline{0.210}
& \textbf{0.253} \\
\bottomrule
\end{tabular}
\end{table*}

Our novel objective function significantly outperforms the \textit{global} approach across nearly all datasets, with the exceptions of \textit{circles} and \textit{alternate stripes}. Specifically, the performance gains of \textbf{ES-Partial} over \textbf{ES-Global} appear to scale with the dimensionality of the dataset. To quantify this, let $\lambda_{\text{partial}}$ and $\lambda_{\text{global}}$ represent the explained variance achieved by each respective method. We define the relative difference as:
\begin{equation}
    D = 100 \times \frac{\lambda_{\text{partial}} - \lambda_{\text{global}}}{\lambda_{\text{partial}}}
\end{equation}
Across the eight datasets for $k=1$ and five for $k=2$, this difference $D$ correlates strongly with dataset dimensionality, yielding correlation coefficients of $\rho = 0.87$ and $\rho = 0.98$, respectively.

\subsection{Qualitative Results}

We have shown that our method outperforms linear PCA across all benchmark datasets, as well as broadly outperforming kPCA (best result reported out of four kernels: rbf, cosine, quadratic, sigmoid). In this section, we show on the synthetic datasets that our method yields principal components that are intuitively interpretable. First, we show this on the results produced by applying Genetic Programming on the synthetic datasets. Then, we show a side-by-side comparisons of biplots for linear PCA and for ES-PCA. 

In Figure \ref{fig:gp_viz}, we show the transformations found by GP that maximize the explained variance (Equation \ref{eqn:contrib_objective}), for $k=1$. For the \textit{nested spheres} dataset, we find transformations for which the circular structure is lost. The first principal component amounts to the difference in radius between the inner and outer spheres. For the \textit{alternate stripes} dataset, we have that $\lambda_1 = 1$, as we are able to discover the periodicity of ${X}^{(1)}$ and the relationship between ${X}^{(1)}$ and ${X}^{(2)}$ through $\epsilon$. This is because we have that $\textrm{cos}(n\pi + \epsilon)) = \textrm{cos}(\epsilon)$.


\begin{figure}[ht!]
    \centering
    \begin{subfigure}{0.49\columnwidth}
        \centering
        \includegraphics[width=\linewidth]{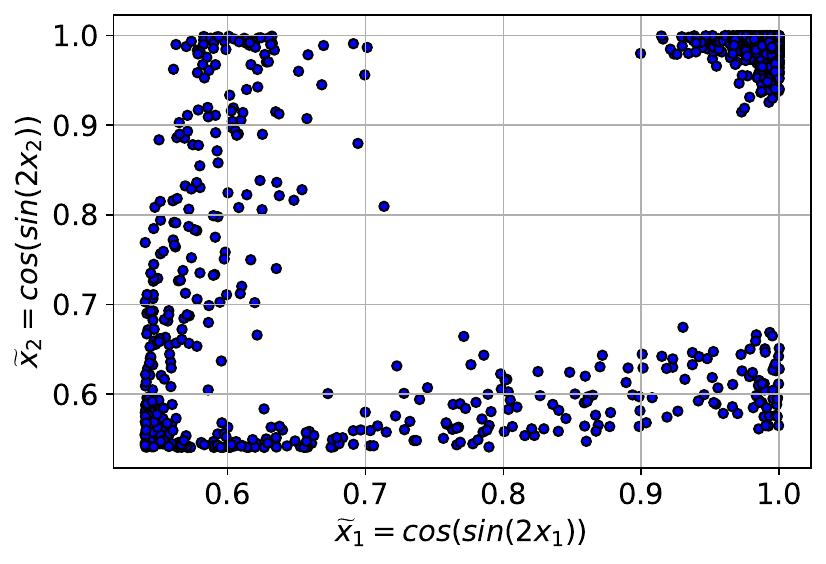}
        \label{fig:circles:transformed}
    \end{subfigure}\hfill
    \begin{subfigure}{0.50\columnwidth}
        \centering
        \includegraphics[width=\linewidth]{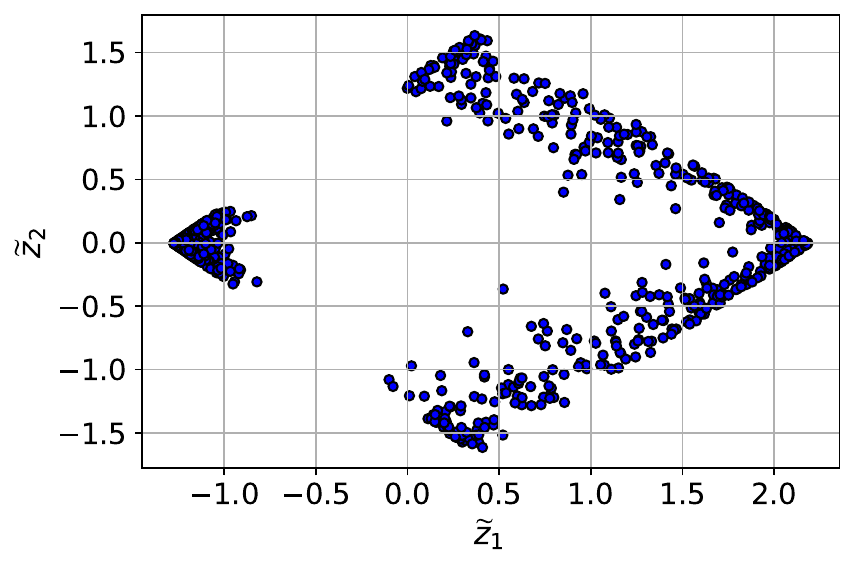}
        \label{fig:circles:pc}
    \end{subfigure}
    \begin{subfigure}{0.49\columnwidth}
        \centering
        \includegraphics[width=\linewidth]{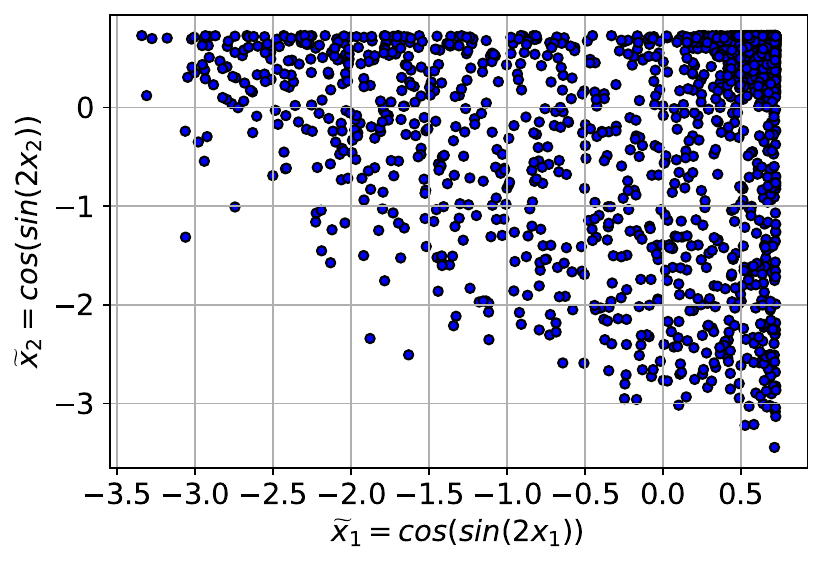}
        \label{fig:spheres:transformed}
    \end{subfigure}\hfill
    \begin{subfigure}{0.50\columnwidth}
        \centering
        \includegraphics[width=\linewidth]{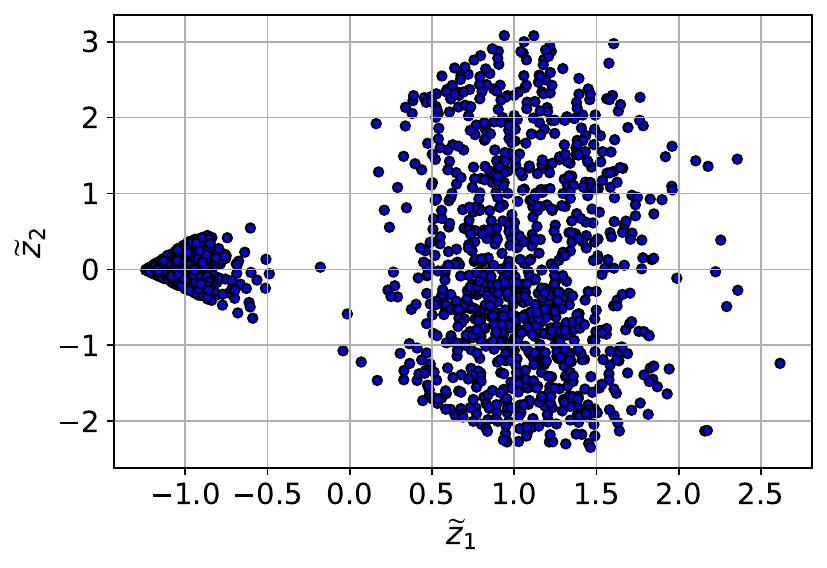}
        \label{fig:spheres:pc}
    \end{subfigure}
    %
    \caption{
        \textit{Top}: nested circles.
        \textit{Bottom}: nested spheres.
    }
    \label{fig:gp_viz}
\end{figure}

\section{Conclusion}
In this paper, we introduced a novel framework for non-linear Principal Component Analysis (NLPCA) that bridges the gap between the flexibility of deep learning and the interpretability of classical multivariate statistics. By parameterizing variable transformations through neural networks and optimizing them via Evolution Strategies (ES), our method bypasses the non-differentiability constraints of standard eigendecomposition. 

Our primary contribution is the introduction of a \textit{partial} objective function. Unlike global variance maximization, which provides a single scalar signal for the entire system, our partial objective decomposes the global explained variance to isolate and maximize the specific contribution of each variable individually. This provides a significantly stronger and more robust learning signal, particularly in higher dimensions. We show that our novel objective significantly outperforms as the dimensionality of the datasets grows.

Furthermore, our framework offers a unified approach to data handling. By mapping each original variable to a single transformed output, we natively process categorical, ordinal, and numerical variables within the same analysis. This avoids the dimensional explosion typically caused by one-hot encoding in methods like Multiple Factor Analysis (MFA), as each category is represented by only a single column in the transformed space. Consequently, our approach preserves a one-to-one mapping that allows for the continued use of standard interpretability tools, such as biplots, to visualize non-linear relationships.

\section{Future Work}
Building upon this foundation, several avenues for further research remain. A primary area of interest is the exploration of alternative neural architectures. Given the relatively small size of the networks, they may be well-suited for Kolmogorov-Arnold Networks (KANs) \cite{liu2024kan}, which could offer greater parameter efficiency and interpretability. Additionally, while we have demonstrated the effectiveness of vanilla Evolution Strategies, investigating more computationally efficient gradient-approximation methods. For instance, Natural Evolution Strategies (NES) \cite{wierstra2014natural} or Covariance Matrix Adaptation Evolution Strategy (CMA-ES) \cite{hansen2001completely} could improve convergence rates. For applications requiring symbolic clarity, modern interpretable versions of Genetic Programming, such as GOMEA \cite{virgolin2017scalable}, could be used to parameterize the variable transformations.

From a computational efficiency standpoint, we suggest investigating the impact of computing the PCA eigendecomposition only every $n$ iterations rather than at every generation. As noted in our complexity analysis, the eigendecomposition is a major bottleneck and reducing its frequency could significantly accelerate training without substantially degrading the quality of the learned transformations. 

Finally, future work should involve the integration of specialized regularization techniques, such as Sparse PCA or Robust PCA, to handle high-dimensional noise and seek more parsimonious representations. We also aim to expand our empirical validation to a wider array of datasets, focusing on qualitative comparisons using biplots to better understand the non-linear relationships captured between disparate categorical and numerical variables.

\bibliographystyle{ACM-Reference-Format}
\bibliography{acmart}

@article{pearson1901liii,
  title={LIII. On lines and planes of closest fit to systems of points in space},
  author={Pearson, Karl},
  journal={The London, Edinburgh, and Dublin philosophical magazine and journal of science},
  volume={2},
  number={11},
  pages={559--572},
  year={1901},
  publisher={Taylor \& Francis}
}

@article{scholkopf1998nonlinear,
  title={Nonlinear component analysis as a kernel eigenvalue problem},
  author={Sch{\"o}lkopf, Bernhard and Smola, Alexander and M{\"u}ller, Klaus-Robert},
  journal={Neural computation},
  volume={10},
  number={5},
  pages={1299--1319},
  year={1998},
  publisher={MIT Press}
}

@inproceedings{uriot2022genetic,
  title={On genetic programming representations and fitness functions for interpretable dimensionality reduction},
  author={Uriot, Thomas and Virgolin, Marco and Alderliesten, Tanja and Bosman, Peter AN},
  booktitle={Proceedings of the Genetic and Evolutionary Computation Conference},
  pages={458--466},
  year={2022}
}

@book{koza1992genetic,
  title={Genetic programming: {O}n the programming of computers by means of natural selection},
  author={Koza, John R},
  volume={1},
  year={1992},
  publisher={MIT Press}
}

@article{salimans2017evolution,
  title={Evolution strategies as a scalable alternative to reinforcement learning},
  author={Salimans, Tim and Ho, Jonathan and Chen, Xi and Sidor, Szymon and Sutskever, Ilya},
  journal={arXiv preprint arXiv:1703.03864},
  year={2017}
}

@article{scikit-learn,
 title={Scikit-learn: Machine Learning in {P}ython},
 author={Pedregosa, F. and Varoquaux, G. and Gramfort, A. and Michel, V.
         and Thirion, B. and Grisel, O. and Blondel, M. and Prettenhofer, P.
         and Weiss, R. and Dubourg, V. and Vanderplas, J. and Passos, A. and
         Cournapeau, D. and Brucher, M. and Perrot, M. and Duchesnay, E.},
 journal={Journal of Machine Learning Research},
 volume={12},
 pages={2825--2830},
 year={2011}
}

@article{gonzalez2010measurement,
  title={Measurement of areas on a sphere using Fibonacci and latitude--longitude lattices},
  author={Gonz{\'a}lez, {\'A}lvaro},
  journal={Mathematical Geosciences},
  volume={42},
  number={1},
  pages={49--64},
  year={2010},
  publisher={Springer}
}

@incollection{rechenberg1978evolutionsstrategien,
  title={Evolutionsstrategien},
  author={Rechenberg, Ingo},
  booktitle={Simulationsmethoden in der Medizin und Biologie},
  pages={83--114},
  year={1978},
  publisher={Springer}
}

@article{sutton1999policy,
  title={Policy gradient methods for reinforcement learning with function approximation},
  author={Sutton, Richard S and McAllester, David and Singh, Satinder and Mansour, Yishay},
  journal={Advances in neural information processing systems},
  volume={12},
  year={1999}
}

@article{hansen2001completely,
  title={Completely derandomized self-adaptation in evolution strategies},
  author={Hansen, Nikolaus and Ostermeier, Andreas},
  journal={Evolutionary computation},
  volume={9},
  number={2},
  pages={159--195},
  year={2001},
  publisher={MIT Press}
}

@article{wierstra2014natural,
  title={Natural evolution strategies},
  author={Wierstra, Daan and Schaul, Tom and Glasmachers, Tobias and Sun, Yi and Peters, Jan and Schmidhuber, J{\"u}rgen},
  journal={The Journal of Machine Learning Research},
  volume={15},
  number={1},
  pages={949--980},
  year={2014},
  publisher={JMLR. org}
}

@inproceedings{sun2009efficient,
  title={Efficient natural evolution strategies},
  author={Sun, Yi and Wierstra, Daan and Schaul, Tom and Schmidhuber, J{\"u}rgen},
  booktitle={Proceedings of the 11th Annual conference on Genetic and evolutionary computation},
  pages={539--546},
  year={2009}
}

@article{williams1992simple,
  title={Simple statistical gradient-following algorithms for connectionist reinforcement learning},
  author={Williams, Ronald J},
  journal={Machine learning},
  volume={8},
  number={3},
  pages={229--256},
  year={1992},
  publisher={Springer}
}

@article{sehnke2010parameter,
  title={Parameter-exploring policy gradients},
  author={Sehnke, Frank and Osendorfer, Christian and R{\"u}ckstie{\ss}, Thomas and Graves, Alex and Peters, Jan and Schmidhuber, J{\"u}rgen},
  journal={Neural Networks},
  volume={23},
  number={4},
  pages={551--559},
  year={2010},
  publisher={Elsevier}
}

@article{cerda2018similarity,
  title={Similarity encoding for learning with dirty categorical variables},
  author={Cerda, Patricio and Varoquaux, Ga{\"e}l and K{\'e}gl, Bal{\'a}zs},
  journal={Machine Learning},
  volume={107},
  number={8},
  pages={1477--1494},
  year={2018},
  publisher={Springer}
}

@inproceedings{hirschfeld1935connection,
  title={A connection between correlation and contingency},
  author={Hirschfeld, Hermann O},
  booktitle={Mathematical Proceedings of the Cambridge Philosophical Society},
  volume={31},
  pages={520--524},
  year={1935},
  organization={Cambridge University Press}
}

@book{greenacre2006multiple,
  title={Multiple correspondence analysis and related methods},
  author={Greenacre, Michael and Blasius, Jorg},
  year={2006},
  publisher={Chapman and Hall/CRC}
}

@article{thurstone1931multiple,
  title={Multiple factor analysis.},
  author={Thurstone, Louis Leon},
  journal={Psychological review},
  volume={38},
  number={5},
  pages={406},
  year={1931},
  publisher={Psychological Review Company}
}

@article{kruger2008developments,
  title={Developments and applications of nonlinear principal component analysis--a review},
  author={Kruger, Uwe and Zhang, Junping and Xie, Lei},
  journal={Principal manifolds for data visualization and dimension reduction},
  pages={1--43},
  year={2008},
  publisher={Springer}
}

@article{hastie1989principal,
  title={Principal curves},
  author={Hastie, Trevor and Stuetzle, Werner},
  journal={Journal of the American Statistical Association},
  volume={84},
  number={406},
  pages={502--516},
  year={1989},
  publisher={Taylor \& Francis}
}

@article{kramer1991nonlinear,
  title={Nonlinear principal component analysis using autoassociative neural networks},
  author={Kramer, Mark A},
  journal={AIChE journal},
  volume={37},
  number={2},
  pages={233--243},
  year={1991},
  publisher={Wiley Online Library}
}

@article{tan1995reducing,
  title={Reducing data dimensionality through optimizing neural network inputs},
  author={Tan, Shufeng and Mayrovouniotis, Michael L},
  journal={AIChE Journal},
  volume={41},
  number={6},
  pages={1471--1480},
  year={1995},
  publisher={Wiley Online Library}
}

@article{escofier1998analyses,
  title={Analyses factorielles simples et multiples},
  author={Escofier, Brigitte and Pag{\`e}s, J{\'e}r{\^o}me},
  journal={Dunod, Paris},
  pages={284},
  year={1998}
}

@misc{linting2007nonparametric,
  title={Nonparametric inference in nonlinear principal components analysis: exploration and beyond [Tesis]. Leiden: Leiden University; 2007 [citado 28 Ene 2020]},
  author={Linting, M},
  year={2007}
}

@inproceedings{de1987nonlinear,
  title={Nonlinear multivariate analysis with optimal scaling},
  author={de Leeuw, Jan},
  booktitle={Develoments in Numerical Ecology},
  pages={157--187},
  year={1987},
  organization={Springer}
}

@article{zou2006sparse,
  title={Sparse principal component analysis},
  author={Zou, Hui and Hastie, Trevor and Tibshirani, Robert},
  journal={Journal of computational and graphical statistics},
  volume={15},
  number={2},
  pages={265--286},
  year={2006},
  publisher={Taylor \& Francis}
}

@article{candes2011robust,
  title={Robust principal component analysis?},
  author={Cand{\`e}s, Emmanuel J and Li, Xiaodong and Ma, Yi and Wright, John},
  journal={Journal of the ACM (JACM)},
  volume={58},
  number={3},
  pages={1--37},
  year={2011},
  publisher={ACM New York, NY, USA}
}

@article{liu2024kan,
  title={Kan: Kolmogorov-arnold networks},
  author={Liu, Ziming and Wang, Yixuan and Vaidya, Sachin and Ruehle, Fabian and Halverson, James and Solja{\v{c}}i{\'c}, Marin and Hou, Thomas Y and Tegmark, Max},
  journal={arXiv preprint arXiv:2404.19756},
  year={2024}
}

@article{pedregosa2011scikit,
  title={Scikit-learn: Machine learning in Python},
  author={Pedregosa, Fabian and Varoquaux, Ga{\"e}l and Gramfort, Alexandre and Michel, Vincent and Thirion, Bertrand and Grisel, Olivier and Blondel, Mathieu and Prettenhofer, Peter and Weiss, Ron and Dubourg, Vincent and others},
  journal={the Journal of machine Learning research},
  volume={12},
  pages={2825--2830},
  year={2011},
  publisher={JMLR. org}
}

@inproceedings{ribeiro2016should,
  title={" Why should i trust you?" Explaining the predictions of any classifier},
  author={Ribeiro, Marco Tulio and Singh, Sameer and Guestrin, Carlos},
  booktitle={Proceedings of the 22nd ACM SIGKDD international conference on knowledge discovery and data mining},
  pages={1135--1144},
  year={2016}
}

@inproceedings{virgolin2017scalable,
  title={Scalable genetic programming by gene-pool optimal mixing and input-space entropy-based building-block learning},
  author={Virgolin, Marco and Alderliesten, Tanja and Witteveen, Cees and Bosman, Peter AN},
  booktitle={Proceedings of the Genetic and Evolutionary Computation Conference},
  pages={1041--1048},
  year={2017}
}

\appendix

\section{Synthetic Datasets}
\label{appendix:synthetic_data}

\subsubsection{Nested circles}

The \textit{nested circles} dataset was generated using the \textit{make\_circles} function from the scikit-learn library \cite{scikit-learn} using the following parameter values: $n\_samples=1000$, $factor=0.1$, $noise=0.1$. This results in a small circle nested into a larger one (both centred around the origin) populated around the perimeter, to which some Gaussian noise $\epsilon \sim N(0, 0.1)$ is added.

\subsubsection{Nested spheres}

The \textit{nested spheres} dataset is similar to \textit{nested circles} as it composed of a smaller sphere contained inside a larger one, both centred around the origin. The data is sampled approximately uniformly on the surface of the spheres using the Fibonacci sphere algorithm \cite{gonzalez2010measurement}, after which some Gaussian noise $\epsilon \sim N(0, 0.1)$ is added.

\subsubsection{Alternating stripes}
\label{appendix:alternate_stripes_explanation}

The \textit{alternating stripes} dataset consists of 2-dimensional stripes with $X^{(1)} = n\pi + \epsilon$, where $n \in \{-4, -3, \ldots, 3, 4\}$ with $\epsilon \sim N(0, 0.1)$, and $X^{(2)}=\epsilon$. The idea is that with the periodicity in $X^{(1)}$ and $X^{(2)}$ there exist transformations of the variables that perfectly correlate.

\begin{figure*}[ht]
    \centering
    \begin{subfigure}{0.3\textwidth}
        \includegraphics[width=\textwidth]{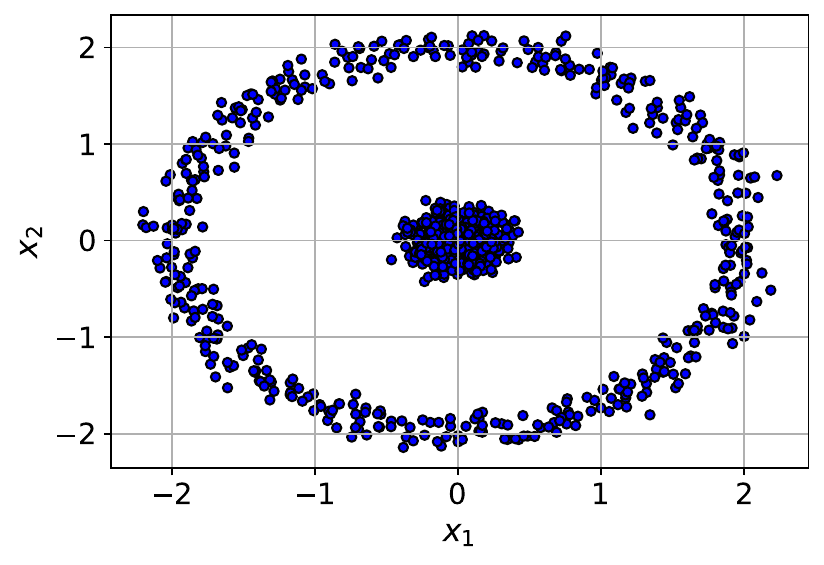}
        \caption{\textit{Nested circles}.}
        \label{fig:circle_first}
    \end{subfigure}
    \hfill
    \begin{subfigure}{0.3\textwidth}
        \includegraphics[width=\textwidth]{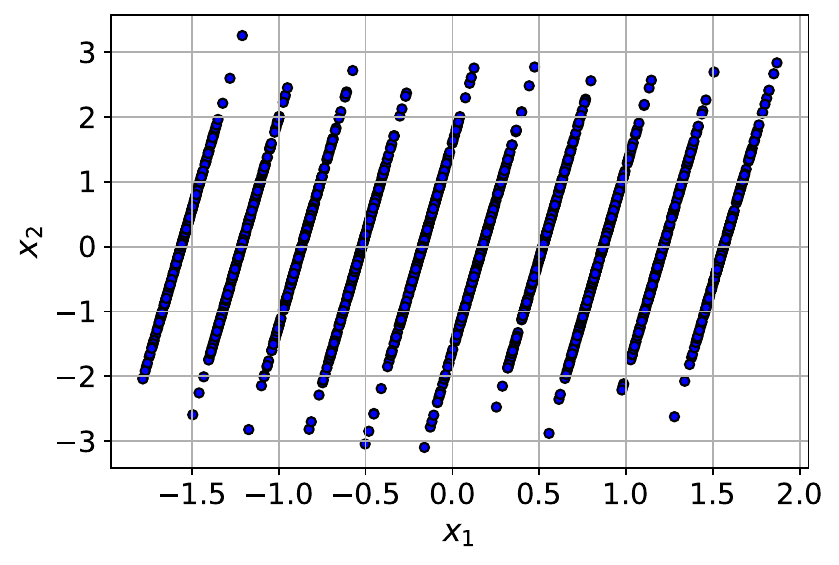}
        \caption{\textit{Alternating stripes}.}
        \label{fig:stripes_first}
    \end{subfigure}
    \hfill
    \begin{subfigure}{0.3\textwidth}
     \vspace{-1cm}
        \includegraphics[width=\textwidth]{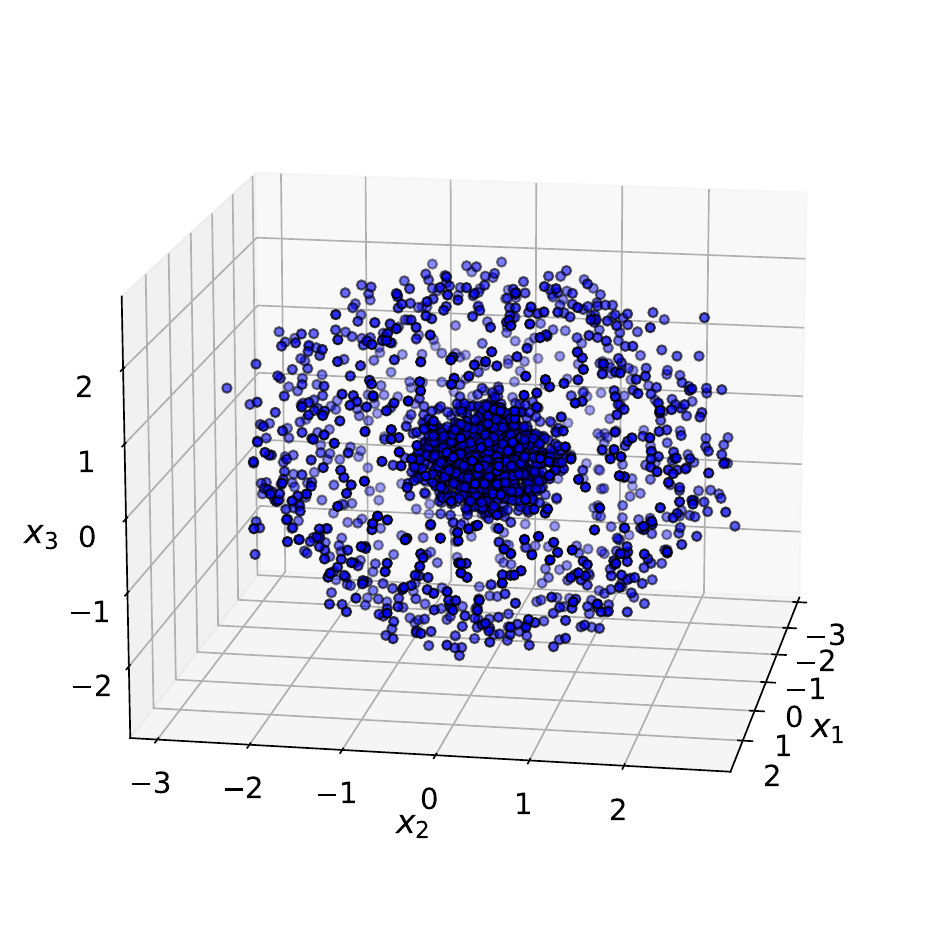}
        \vspace{-1.1cm}
        \caption{\textit{Nested spheres}.}
        \label{fig:sphere_first}
    \end{subfigure}
    \caption{Synthetic datasets.}
    \label{fig:synthetic_data}
\end{figure*}

\section{Genetic Programming}
\label{appendix:genetic_programming}

In GP, we start from a random population of individuals (represented as computational trees as shown in Figure \ref{fig:gp_diagram} below) which are then evolved (optimized) iteratively for a given number of generations, using various operations called genetic operators. At each generation, solutions are randomly modified and combined with each other where only the chosen solutions are passed on to the next generation. Then, at the end of the optimization procedure, the solution with the best objective value is selected and applied to our problem. 

For each generation, a basic tournament selection process is used to choose the individuals that will act as parents to produce offspring via the genetic operators. Each parent can be subject to up to three different genetic operators: (i) \textit{crossover}, (ii) \textit{subtree mutation}, and (iii) \textit{one-point mutation}, with probabilities of $p_c$, $p_s$, and $p_o$, respectively, for which the values are given in Table \ref{tab:gp_parameters}. Since the transformations $\Phi_{j}, j=1,\ldots,p$, are strictly functions of their associated variable $X^{(j)}$ (see Figure \ref{fig:transformations}), the crossover operator is only applied to transformations that act on the same variable. 

The population at the first generation is initialized using the popular ramped half-and-half method~\cite{koza1992genetic}, and random constants are drawn from $N(0, 1)$. The terminal set (i.e., the set of all possible leaf nodes) for each transformation $\Phi_{j}$ is given by $\mathcal{T}_{j} = \{X^{(j)}, \mathcal{R}\}$, where $\mathcal{R}$ represents the set of random constants. The function set (i.e., the set of possible non-leaf nodes) is
\[
F = \{-, +, \times, \textrm{cos}, \textrm{sin}\}.
\]

\begin{table}[ht!]
\centering
\caption{GP parameters and their values.}
\label{tab:gp_parameters}
\begin{tabular}{lc}
\toprule  
\textbf{Parameter} & \textbf{Value} \\
\midrule
Crossover rate ($p_c$) & 0.8 \\
Subtree mutation rate ($p_s$) & 0.2 \\
Operator mutation rate ($p_o$) & 0.2 \\
Population size ($P$) & 1000 \\
Generations & 100 \\
Tree depth: (Min, Max) & (2, 7) \\
Tournament Size & 7 \\
\bottomrule
\end{tabular}
\end{table}

\begin{figure}[H]
\centering
\includegraphics[width=0.5\linewidth]{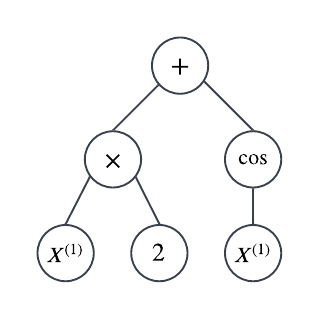}
\caption{Example of a computational tree of size 6 representing $\Phi_{1} = 2 \times X^{(1)} + \textrm{cos}(X^{(1)})$.}
\label{fig:gp_diagram}
\end{figure}

\section{kPCA results visualization}
\label{appendix:kpca_viz_synthetic}

In this section, we display the plots for each of the original synthetic datasets projected onto the first two principal components computed using various kernel functions.

\section{Evolution Strategies}

\subsection{Neural Network Architecture}
\label{appendix:nn}

In Table \ref{tab:nn_pameters}, we describe the chosen architecture of the neural networks for each of the variable types: numerical and categorical. For both types of variables, we choose a simple, over-parametrized architecture of 2 hidden layers of 64 neurons with batch normalization.

\begin{table}[H]
\centering
\caption{Neural network hyperparameters.}
\label{tab:nn_pameters}
\begin{tabular}{lccc}
\toprule  
\textbf{Variable type} & \textbf{\# hidden layers} & \textbf{\# neurons} & \textbf{Activation} \\
\midrule
Numerical & 1 & 64 & ReLU \\
\bottomrule
\end{tabular}
\end{table}

\subsection{Evolution Strategies}
\label{appendix:es}

In this section, we describe the training procedure used in this paper. Essentially, using ES is analogous to training a neural network with standard gradient descent except that the gradient step is replaced by an ES step.

\begin{table}[h!]
\centering
\caption{ES hyperparameter values.}
\label{tab:es}
\begin{tabular}{lcccc}
\toprule  
\textbf{Population size $P$} & \textbf{Noise $\sigma$} & \textbf{Learning rate $\alpha$} & \textbf{Batch size} \\
\midrule
200 & $10^{-2}$ & $10^{-2}$ & 128 \\
\bottomrule
\end{tabular}
\end{table}


\clearpage
\label{appendix:double_column_figures}

\begin{figure*}[t!]
\centering
\begin{subfigure}{0.3\textwidth}
\includegraphics[width=\textwidth]{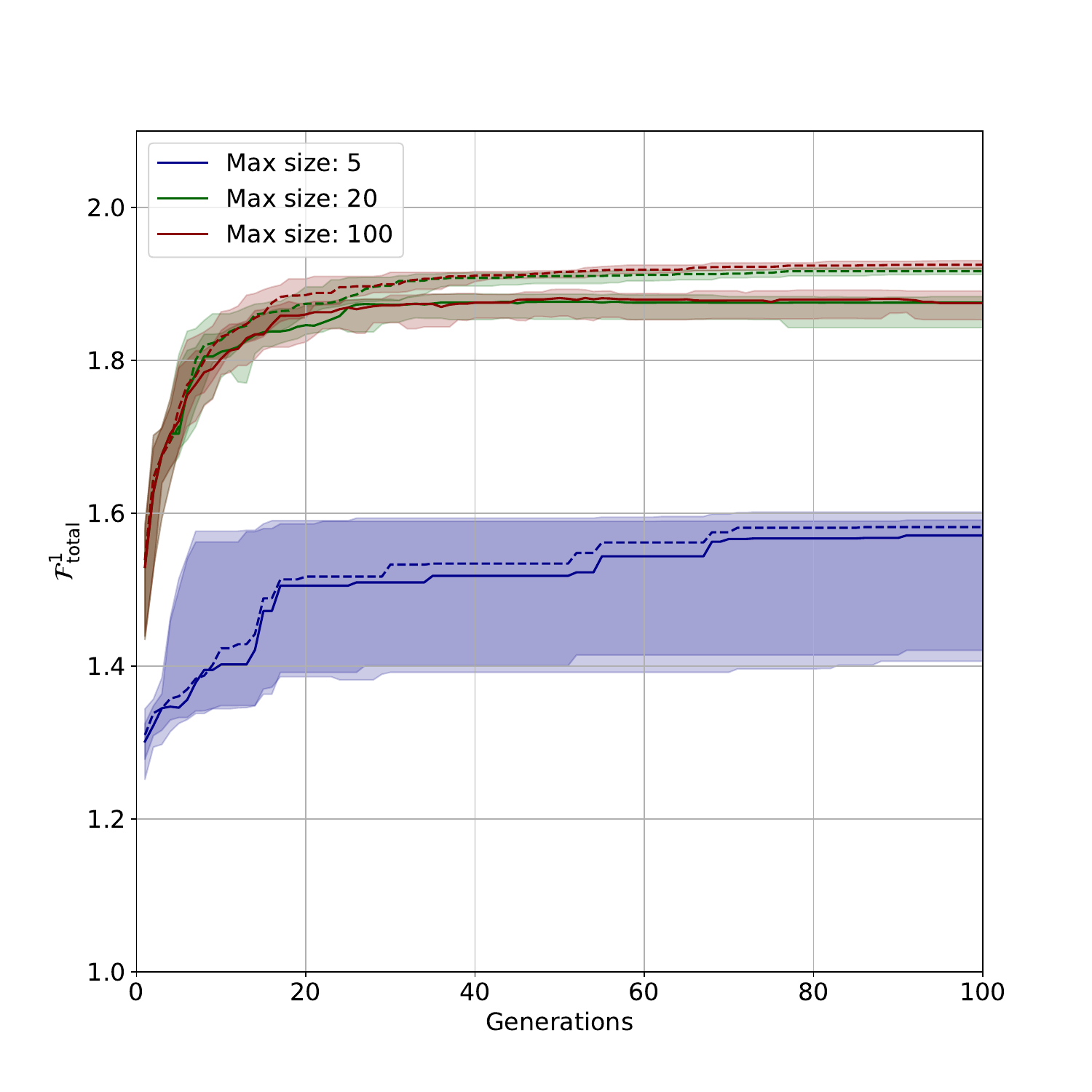}
\caption{Nested circles dataset.}
\end{subfigure}
\hfill
\begin{subfigure}{0.3\textwidth}
\includegraphics[width=\textwidth]{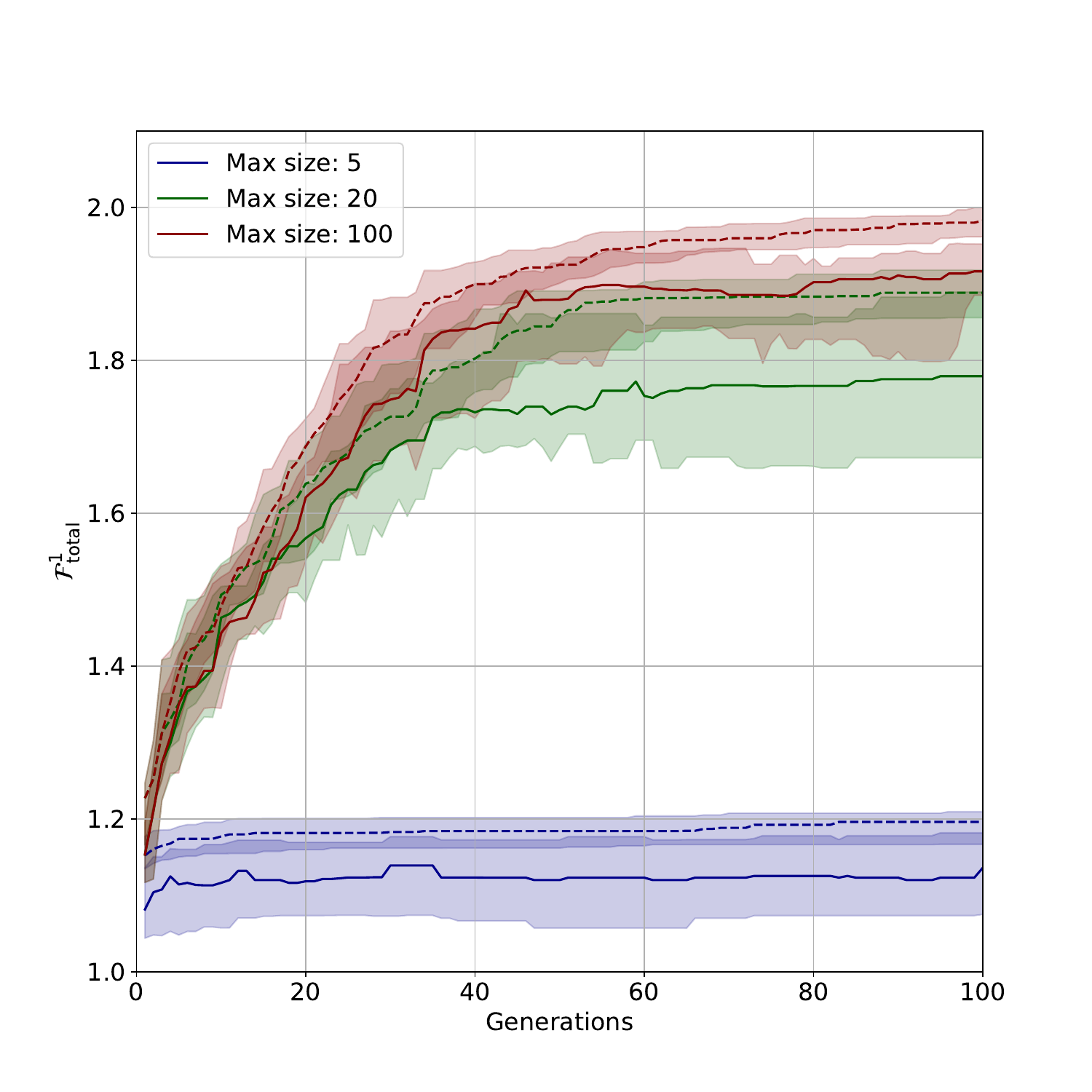}
\caption{Nested spheres dataset.}
\end{subfigure}
\hfill
\begin{subfigure}{0.3\textwidth}
\includegraphics[width=\textwidth]{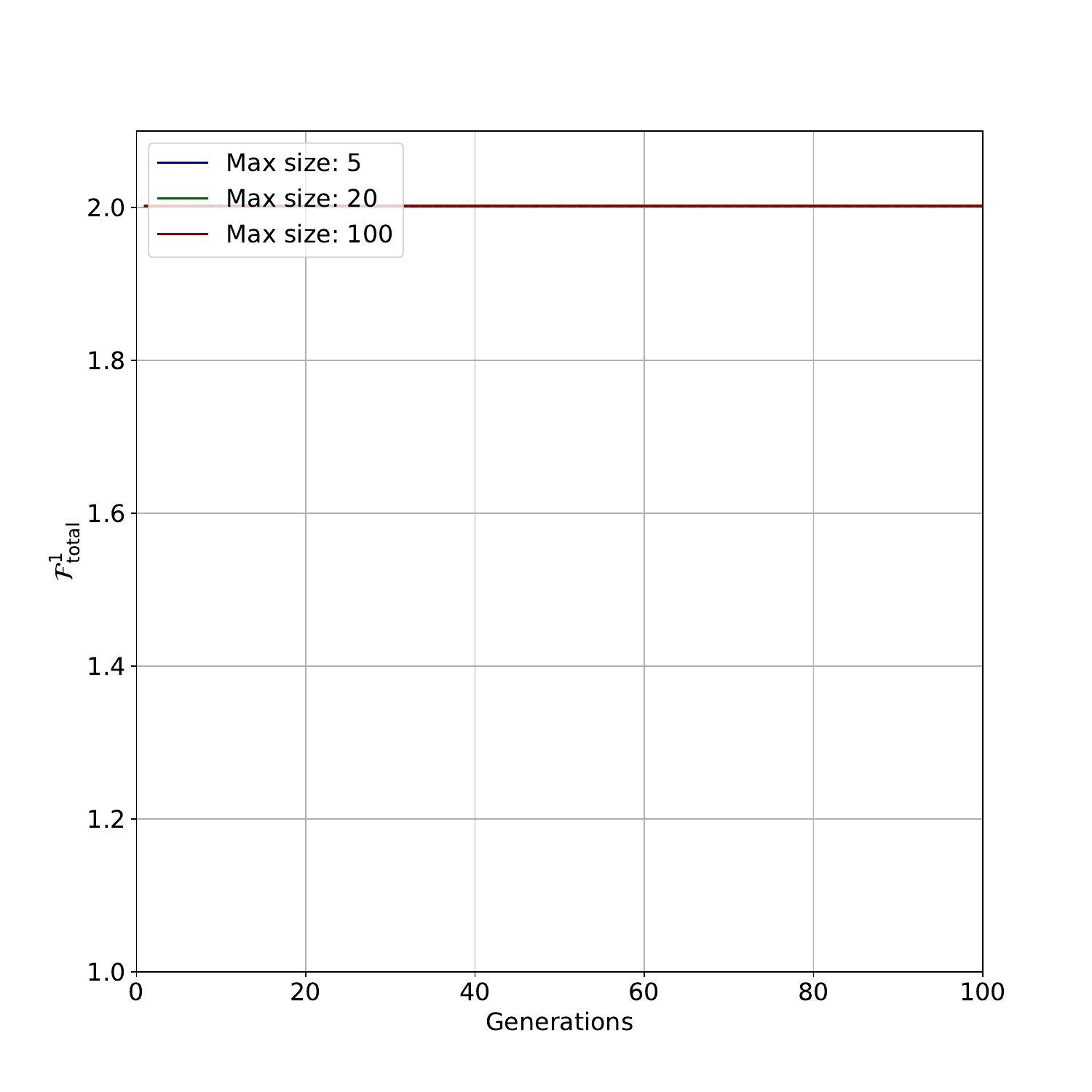}
\caption{Alternating stripes dataset.}
\end{subfigure}
\caption{Fitness function $\mathcal{F}_{\textrm{global}}^{1}$ on the validation (solid) and training (dashed) sets over 100 generations.}
\label{fig:generations}
\end{figure*}

\begin{figure*}[t!]
\centering
\begin{subfigure}{0.24\textwidth}
\includegraphics[width=\textwidth]{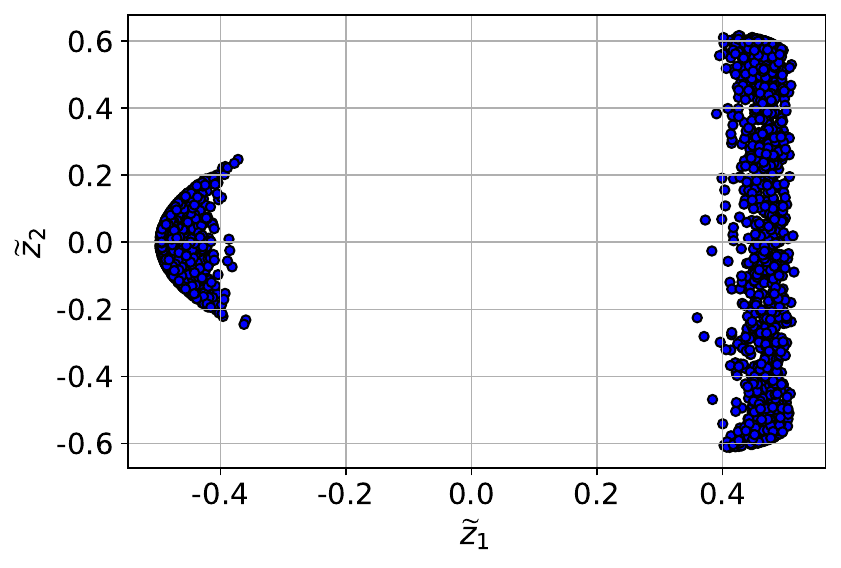}
\caption{RBF kernel.}
\end{subfigure}
\hfill
\begin{subfigure}{0.24\textwidth}
\includegraphics[width=\textwidth]{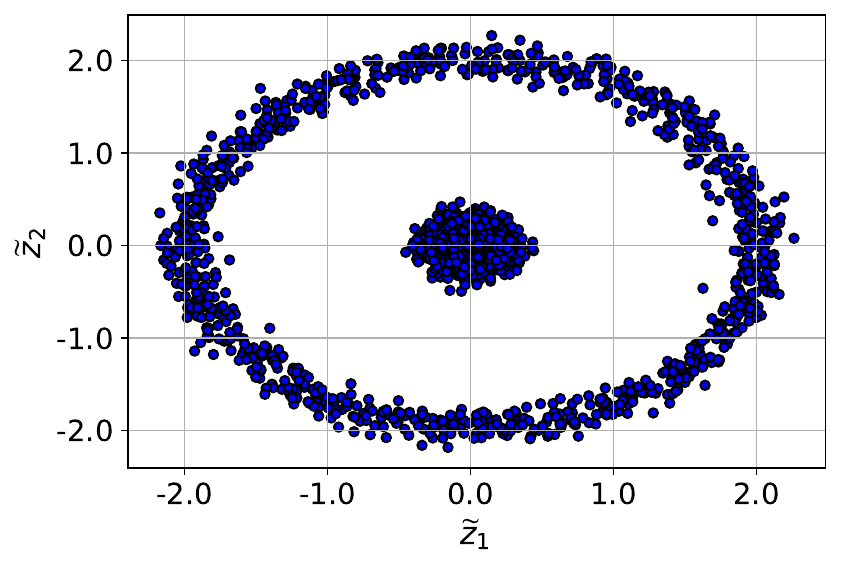}
\caption{Polynomial kernel (d=2).}
\end{subfigure}
\hfill
\begin{subfigure}{0.24\textwidth}
\includegraphics[width=\textwidth]{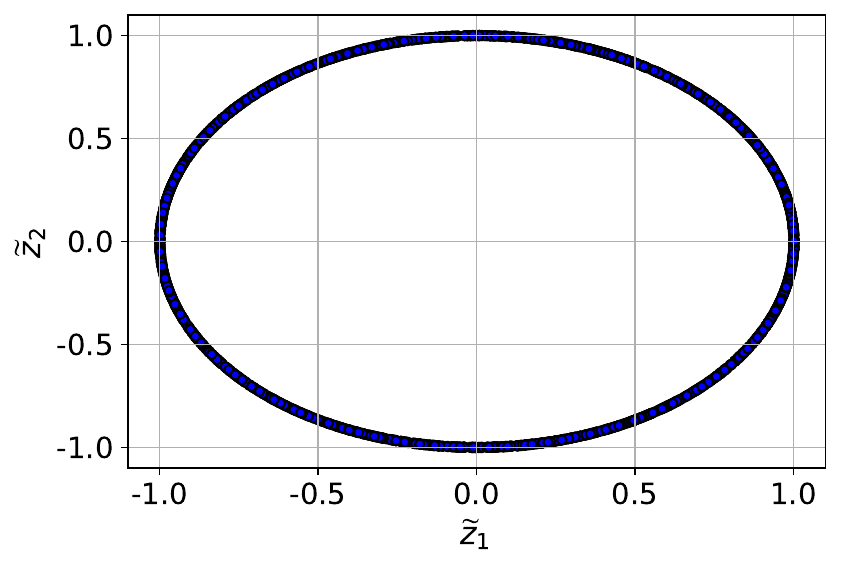}
\caption{Cosine kernel.}
\end{subfigure}
\hfill
\begin{subfigure}{0.24\textwidth}
\includegraphics[width=\textwidth]{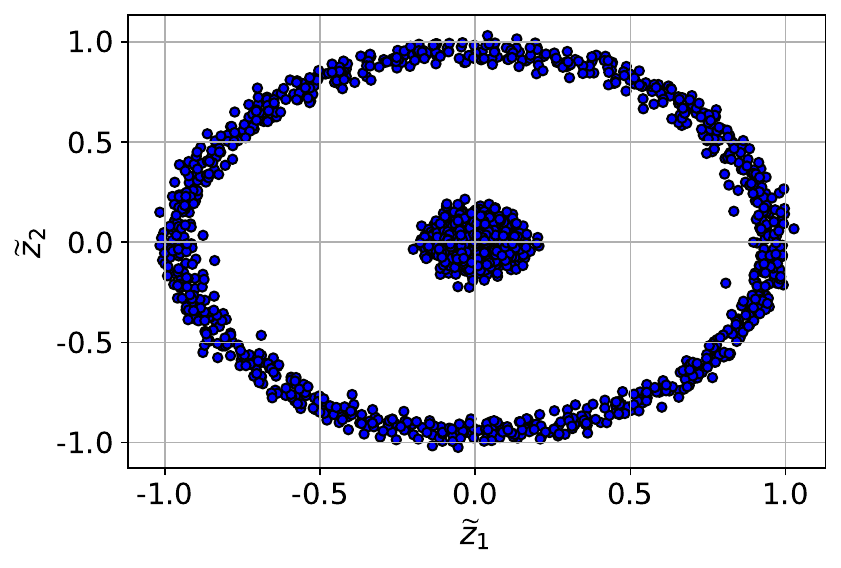}
\caption{Sigmoid kernel.}
\end{subfigure}
\caption{Projection of the nested circles data using kPCA.}
\label{fig:circles_kpca}
\end{figure*}

\begin{figure*}[t!]
\centering
\begin{subfigure}{0.24\textwidth}
\includegraphics[width=\textwidth]{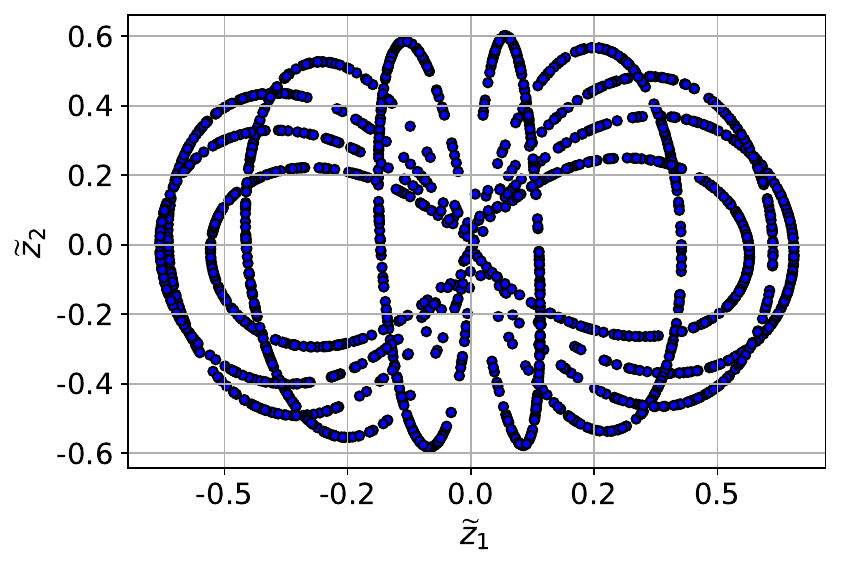}
\caption{RBF kernel.}
\end{subfigure}
\hfill
\begin{subfigure}{0.24\textwidth}
\includegraphics[width=\textwidth]{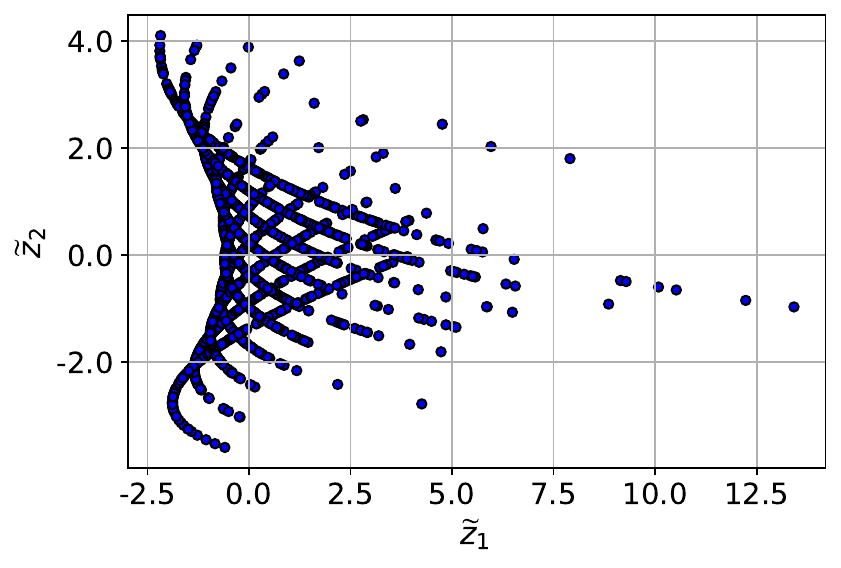}
\caption{Polynomial kernel (d=2).}
\end{subfigure}
\hfill
\begin{subfigure}{0.24\textwidth}
\includegraphics[width=\textwidth]{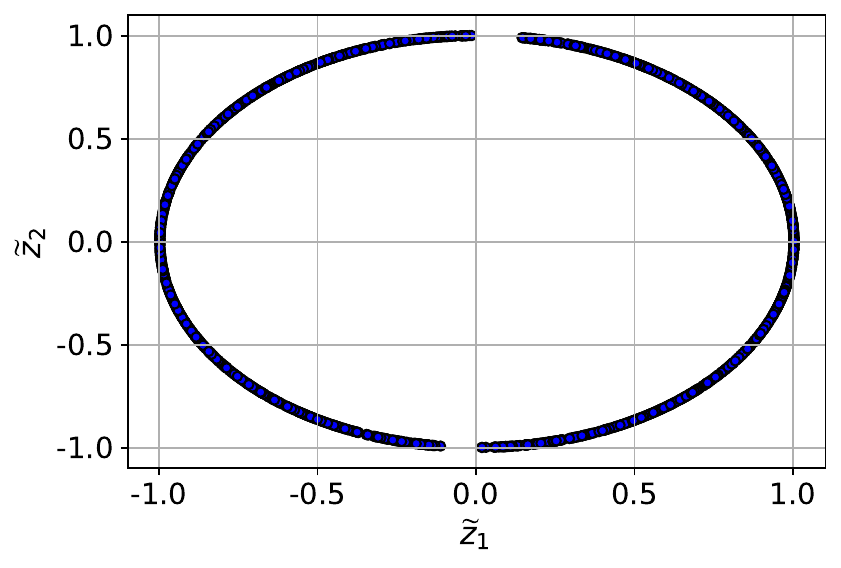}
\caption{Cosine kernel.}
\end{subfigure}
\hfill
\begin{subfigure}{0.24\textwidth}
\includegraphics[width=\textwidth]{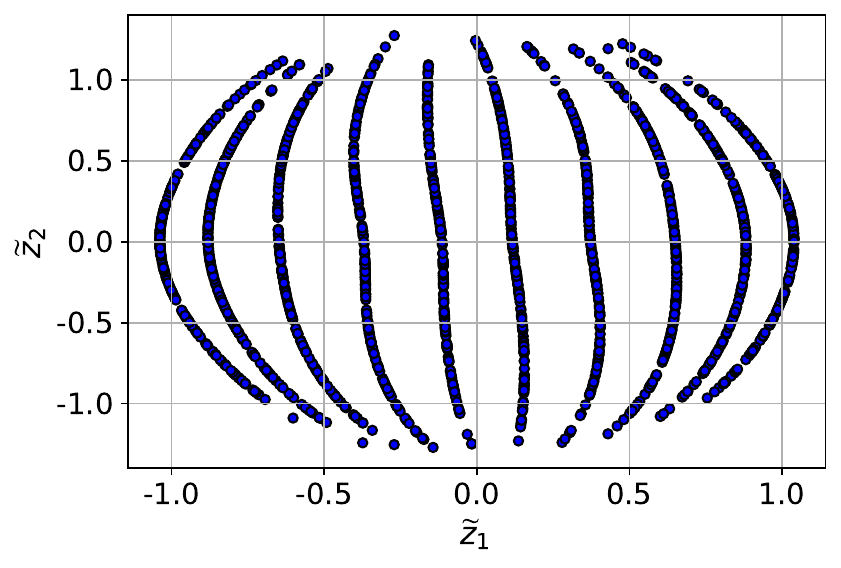}
\caption{Sigmoid kernel.}
\end{subfigure}
\caption{Projection of the alternating stripes data using kPCA.}
\label{fig:alternate_stripes_kpca}
\end{figure*}

\begin{figure*}[t!]
\centering
\begin{subfigure}{0.24\textwidth}
\includegraphics[width=\textwidth]{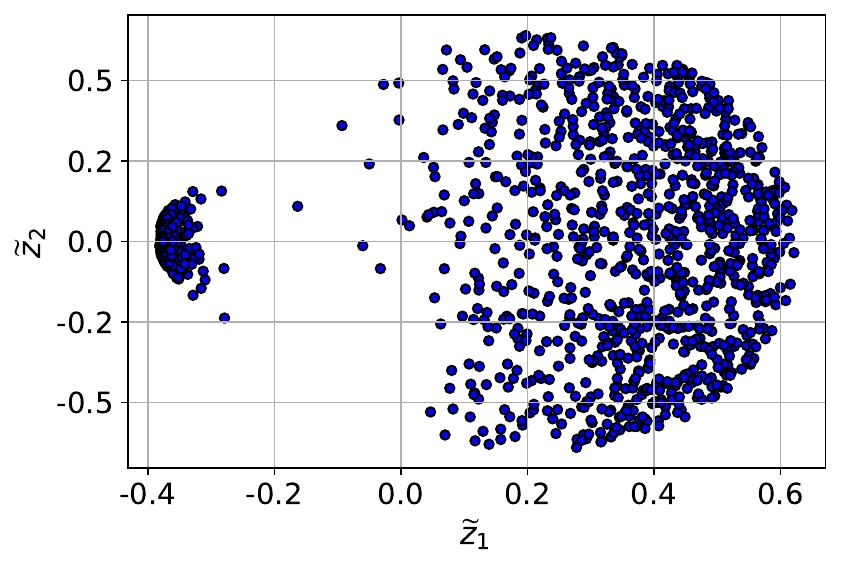}
\caption{RBF kernel.}
\end{subfigure}
\hfill
\begin{subfigure}{0.24\textwidth}
\includegraphics[width=\textwidth]{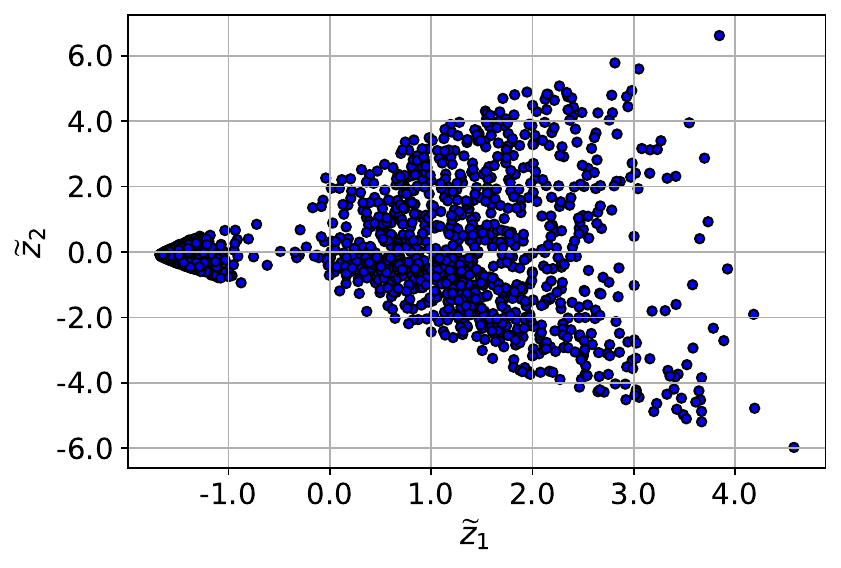}
\caption{Polynomial kernel (d=2).}
\end{subfigure}
\hfill
\begin{subfigure}{0.24\textwidth}
\includegraphics[width=\textwidth]{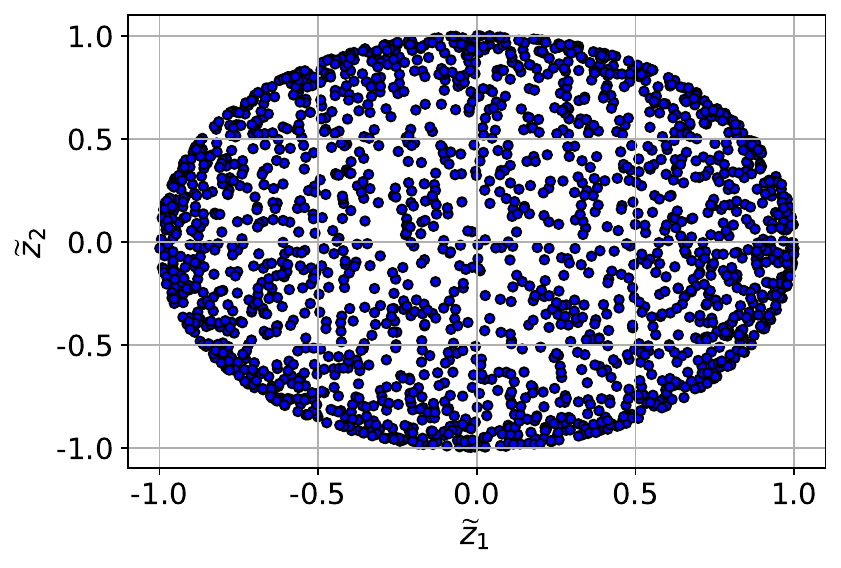}
\caption{Cosine kernel.}
\end{subfigure}
\hfill
\begin{subfigure}{0.24\textwidth}
\includegraphics[width=\textwidth]{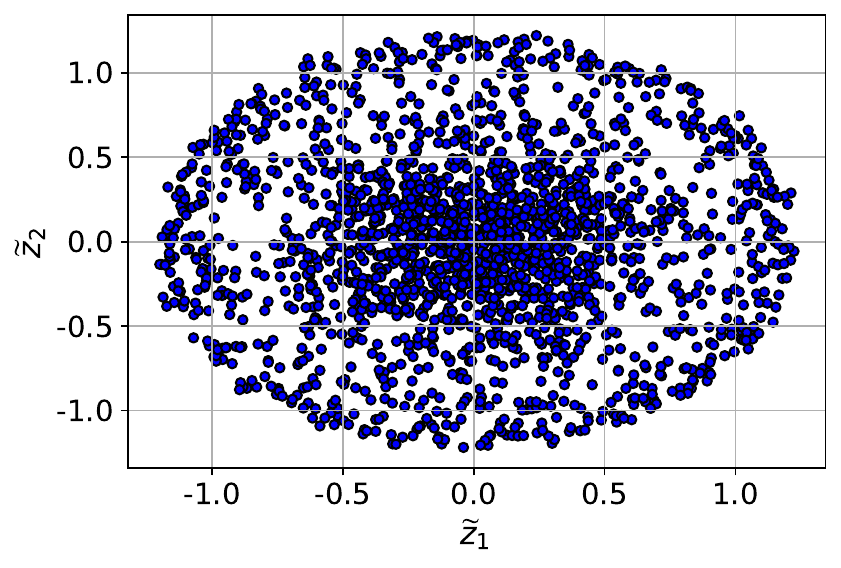}
\caption{Sigmoid kernel.}
\end{subfigure}
\caption{Projection of the nested spheres data using kPCA.}
\label{fig:spheres_kpca}
\end{figure*}

\end{document}